\newcommand{\ms}[2]{{#1}{\footnotesize $\,\pm${#2}}}
\newcommand{\msb}[2]{{\textbf{#1}} $\,\pm${\footnotesize {#2}}}
\theoremstyle{plain}
\newtheorem{theorem}{Theorem}
\newtheorem{lemma}{Lemma}
\theoremstyle{definition}
\newtheorem{definition}{Definition}
\theoremstyle{remark}
\icmltitlerunning{A Square Peg in a Square Hole: Meta-Expert for Long-Tailed Semi-Supervised Learning}
\begin{document}

\twocolumn[
\icmltitle{A Square Peg in a Square Hole: \\
           Meta-Expert for Long-Tailed Semi-Supervised Learning}

% It is OKAY to include author information, even for blind
% submissions: the style file will automatically remove it for you
% unless you've provided the [accepted] option to the icml2025
% package.

% List of affiliations: The first argument should be a (short)
% identifier you will use later to specify author affiliations
% Academic affiliations should list Department, University, City, Region, Country
% Industry affiliations should list Company, City, Region, Country

% You can specify symbols, otherwise they are numbered in order.
% Ideally, you should not use this facility. Affiliations will be numbered
% in order of appearance and this is the preferred way.
\icmlsetsymbol{equal}{*}

\begin{icmlauthorlist}
\icmlauthor{Yaxin Hou}{sch}
\icmlauthor{Yuheng Jia}{sch,yyy}
% \icmlauthor{Firstname1 Lastname1}{equal,yyy}
% \icmlauthor{Firstname2 Lastname2}{equal,yyy,comp}
% \icmlauthor{Firstname3 Lastname3}{comp}
% \icmlauthor{Firstname4 Lastname4}{sch}
% \icmlauthor{Firstname5 Lastname5}{yyy}
% \icmlauthor{Firstname6 Lastname6}{sch,yyy,comp}
% \icmlauthor{Firstname7 Lastname7}{comp}
% \icmlauthor{}{sch}
% \icmlauthor{Firstname8 Lastname8}{sch}
% \icmlauthor{Firstname8 Lastname8}{yyy,comp}
% \icmlauthor{}{sch}
% \icmlauthor{}{sch}
\end{icmlauthorlist}

\icmlaffiliation{yyy}{Key Laboratory of New Generation Artificial Intelligence Technology and Its Interdisciplinary Applications (Southeast University), Ministry of Education, Nanjing, China}
% \icmlaffiliation{comp}{Company Name, Location, Country}
\icmlaffiliation{sch}{School of Computer Science and Engineering, Southeast University, Nanjing, China.}

\icmlcorrespondingauthor{Yuheng Jia}{yhjia@seu.edu.cn}
% \icmlcorrespondingauthor{Firstname2 Lastname2}{first2.last2@www.uk}

% You may provide any keywords that you
% find helpful for describing your paper; these are used to populate
% the "keywords" metadata in the PDF but will not be shown in the document
\icmlkeywords{Machine Learning, ICML}

\vskip 0.3in
]

% this must go after the closing bracket ] following \twocolumn[ ...

% This command actually creates the footnote in the first column
% listing the affiliations and the copyright notice.
% The command takes one argument, which is text to display at the start of the footnote.
% The \icmlEqualContribution command is standard text for equal contribution.
% Remove it (just {}) if you do not need this facility.

\printAffiliationsAndNotice{} % leave blank if no need to mention equal contribution
% \printAffiliationsAndNotice{\icmlEqualContribution} % otherwise use the standard text.

\begin{abstract}

This paper studies the long-tailed semi-supervised learning (LTSSL) with distribution mismatch, where the class distribution of the labeled training data follows a long-tailed distribution and mismatches with that of the unlabeled training data. Most existing methods introduce auxiliary classifiers (experts) to model various unlabeled data distributions and produce pseudo-labels, but the expertises of various experts are not fully utilized. We observe that different experts are good at predicting different intervals of samples, e.g., long-tailed expert is skilled in samples located in the head interval and uniform expert excels in samples located in the medium interval. Therefore, we propose a dynamic expert assignment module that can estimate the class membership (i.e., head, medium, or tail class) of samples, and dynamically assigns suitable expert to each sample based on the estimated membership to produce high-quality pseudo-label in the training phase and produce prediction in the testing phase. We also theoretically reveal that integrating different experts' strengths will lead to a smaller generalization error bound. Moreover, we find that the deeper features are more biased toward the head class but with more discriminative ability, while the shallower features are less biased but also with less discriminative ability. We, therefore, propose a multi-depth feature fusion module to utilize different depth features to mitigate the model bias. Our method demonstrates its effectiveness through comprehensive experiments on the CIFAR-10-LT, STL-10-LT, and SVHN-LT datasets across various settings.

\end{abstract}

\begin{table}[!t]
\caption{Accuracy (\%) of testing set under three different unlabeled data distributions with varying experts. In CPE~\cite{cpe}, $E_2$ denotes uniform expert, while $E_1$ and $E_3$ denote long-tailed and inverse long-tailed experts, respectively. Our proposed method and Upper-E use the proposed dynamic expert assignment (DEA) module and the ground-truth class membership to select a specific expert, respectively. The dataset is CIFAR-10-LT with imbalance ratio $\gamma_l = 200$. $\dagger$ indicates our proposed method using the ground-truth class membership to select a specific expert.}
\label{tab:each-expert-does-its-work-acc}
\centering
\resizebox{0.95\linewidth}{!}{
\begin{tabular}{c|c|ccc|c}
\toprule
Distribution                       & Expert                      & Head                 & Medium                 & Tail              & Overall \\ \midrule
\multirow{6}{*}{\shortstack[c]{Consistent}}
                                   & $E_1$                       & \textbf{94.67}       & 74.10                  & 38.73             & 69.66 \\
                                   & $E_2$                       & 87.23                & \textbf{77.30}         & \textbf{71.60}    & \underline{78.57} \\
                                   & $E_3$                       & 3.57                 & 72.35                  & 68.47             & 50.55 \\ \cmidrule(l){2-6}
                                   & Ours                        & 89.13                & 79.52                  & 77.07             & \textbf{81.67} \\
                                   & \cellcolor{black!10} Upper-E            & \cellcolor{black!10} 94.67             & \cellcolor{black!10} 77.30             & \cellcolor{black!10} 68.47             & \cellcolor{black!10} 79.86 \\
                                   & \cellcolor{black!10} Upper-E$\dagger$   & \cellcolor{black!10} 94.17             & \cellcolor{black!10} 77.53             & \cellcolor{black!10} 85.93             & \cellcolor{black!10} 85.04 \\ \midrule
\multirow{6}{*}{\shortstack[c]{Uniform}}
                                   & $E_1$                       & \textbf{93.47}       & 74.73                  & 66.83            & 77.98 \\
                                   & $E_2$                       & 86.93                & \textbf{77.60}         & 87.83            & \underline{83.47} \\
                                   & $E_3$                       & 0.53                 & 74.55                  & \textbf{90.20}   & 57.04 \\ \cmidrule(l){2-6}
                                   & Ours                        & 89.40                & 78.35                  & 86.00            & \textbf{83.96} \\
                                   & \cellcolor{black!10} Upper-E            & \cellcolor{black!10} 93.47             & \cellcolor{black!10} 77.60             & \cellcolor{black!10} 90.20             & \cellcolor{black!10} 86.14 \\
                                   & \cellcolor{black!10} Upper-E$\dagger$   & \cellcolor{black!10} 93.40             & \cellcolor{black!10} 78.40             & \cellcolor{black!10} 90.43             & \cellcolor{black!10} 86.51 \\ \midrule
\multirow{6}{*}{\shortstack[c]{Inverse}}
                                   & $E_1$                       & \textbf{93.56}       & 74.25                  & 75.87            & 80.53 \\
                                   & $E_2$                       & 83.90                & \textbf{78.58}         & 92.67            & \underline{84.40} \\
                                   & $E_3$                       & 57.00                & 77.43                  & \textbf{95.10}   & 76.60 \\ \cmidrule(l){2-6}
                                   & Ours                        & 88.60                & 78.90                  & 92.03            & \textbf{85.75} \\
                                   & \cellcolor{black!10} Upper-E            & \cellcolor{black!10} 93.56             & \cellcolor{black!10} 78.58             & \cellcolor{black!10} 95.10              & \cellcolor{black!10} 88.03 \\
                                   & \cellcolor{black!10} Upper-E$\dagger$   & \cellcolor{black!10} 92.20             & \cellcolor{black!10} 80.05             & \cellcolor{black!10} 96.40              & \cellcolor{black!10} 88.60 \\ \bottomrule
\end{tabular}
} 
\vskip -0.1in
\end{table}

\begin{figure*}[t]
\centering
\subfigure[Consistent]{
\includegraphics[width=0.3\textwidth]{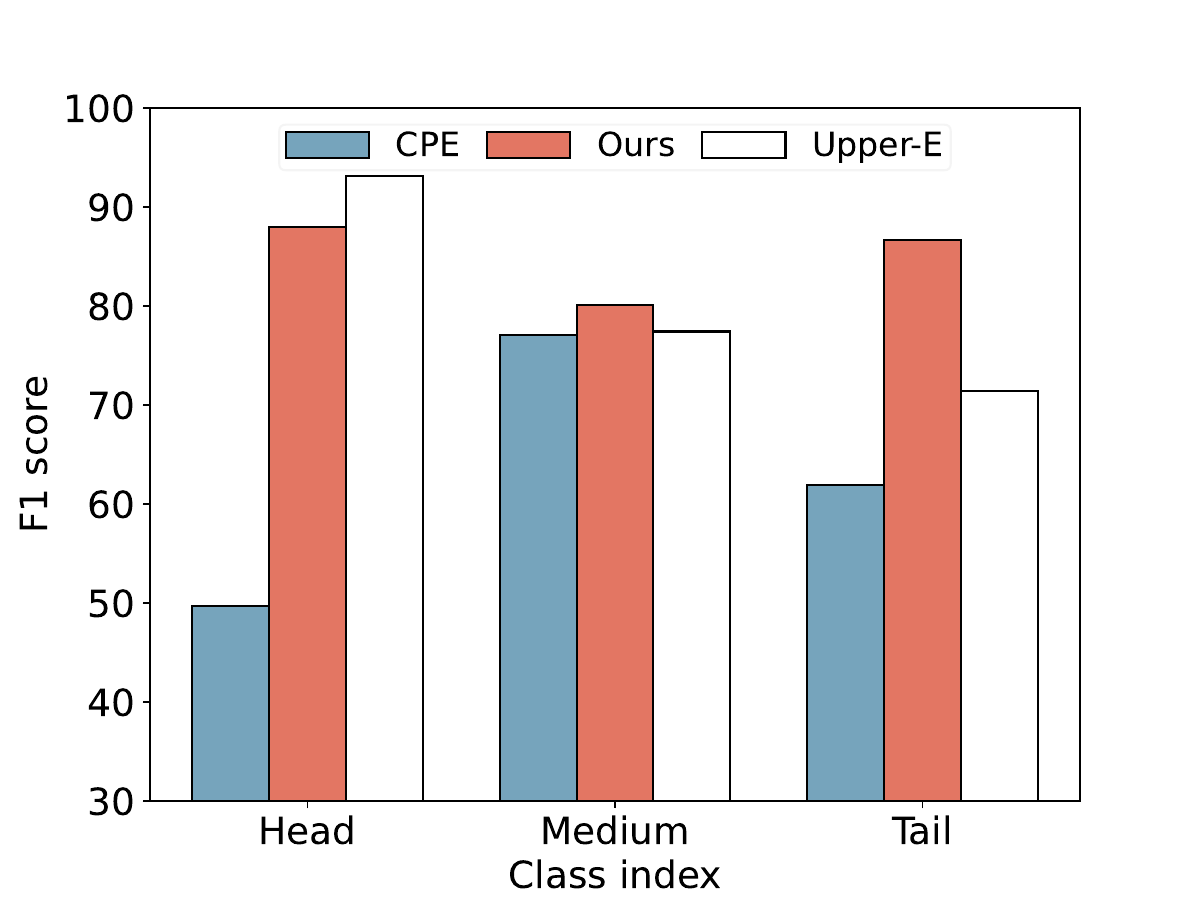}
\label{fig:consistent-F1}
}
\hspace{-3.1ex}
\subfigure[Uniform]{
\includegraphics[width=0.3\textwidth]{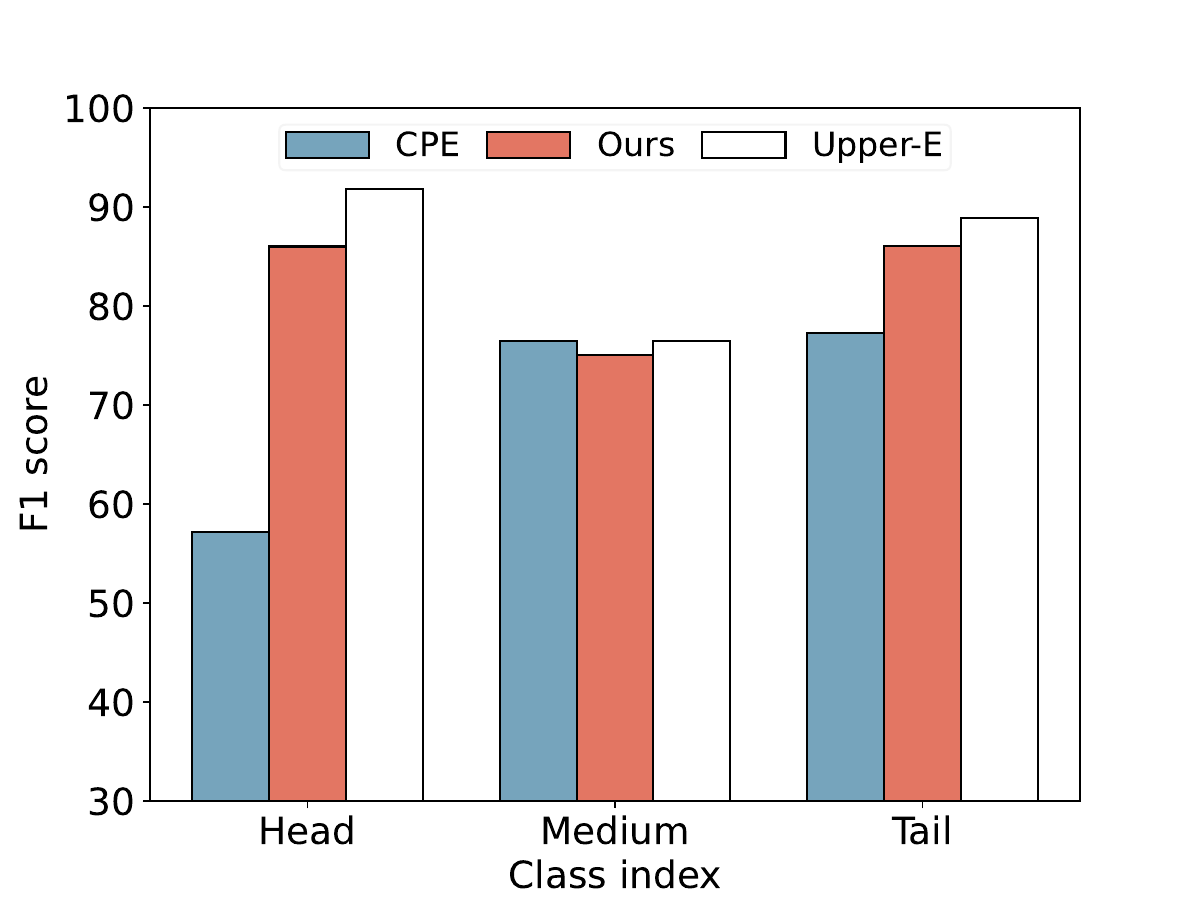}
\label{fig:uniform-F1}
}
\hspace{-3.1ex}
\subfigure[Inverse]{
\includegraphics[width=0.3\textwidth]{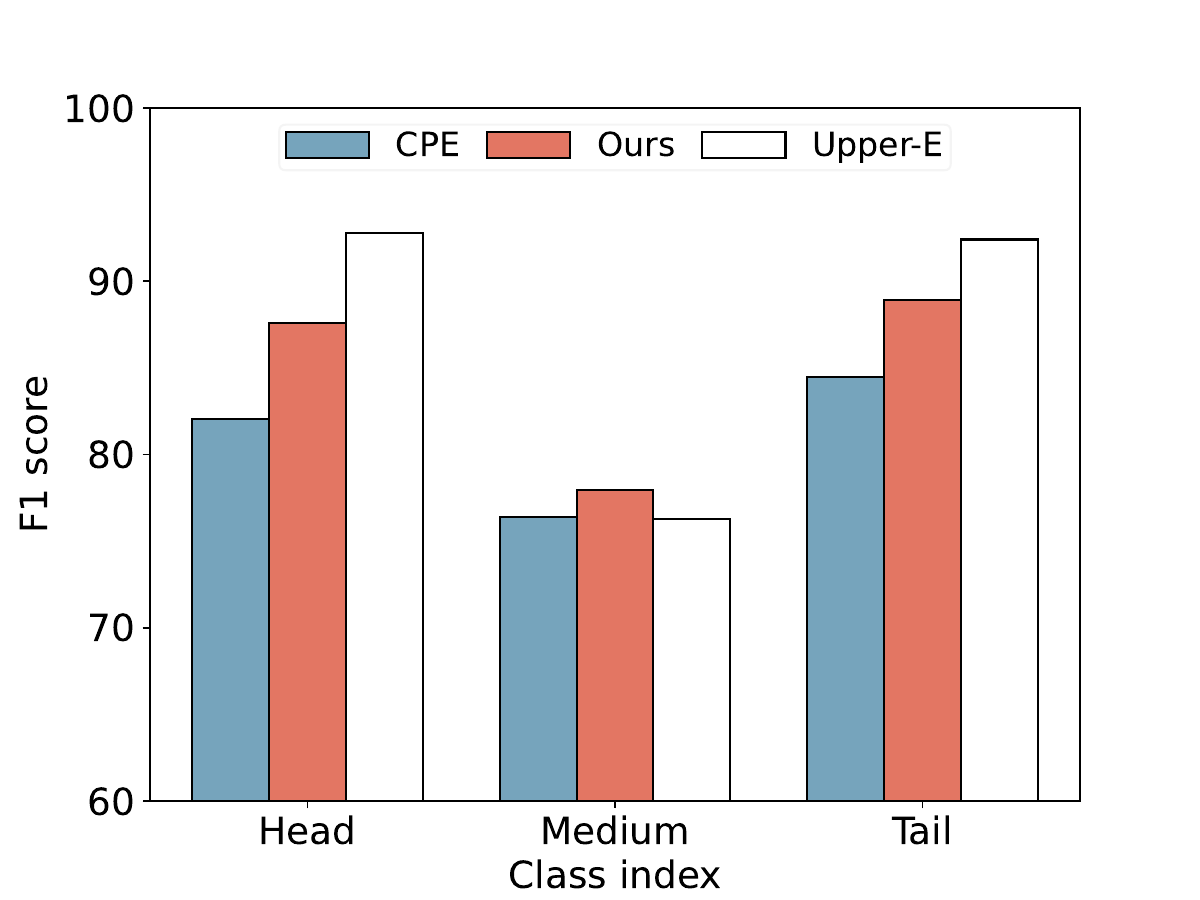}
\label{fig:inverse-F1}
}\vspace{-0.3cm}
\caption{Comparison of F1 score (\%) of pseudo-label predictions between CPE~\cite{cpe} and our proposed method under three different cases. Upper-E denotes the F1 score (\%) of CPE with ground-truth class membership. The dataset is CIFAR-10-LT with imbalance ratio $\gamma_l = 200$. Our proposed method can generate pseudo-labels with higher F1 scores than CPE on all the cases, indicating its effectiveness in the utilization of unlabeled samples.
}
\label{fig:pseudo-label-f1}
\end{figure*}

\section{Introduction}
\label{sec:introduction}

Over the last decade, extensive high-quality labeled data have improved the performance of deep neural networks (DNNs). However, in specialized domains such as medical diagnosis~\cite{ssci,ecl}, the scarcity and imbalance of labeled data can be a significant challenge due to the high costs associated with data collection or annotation~\cite{ssl}. To solve this issue, semi-supervised learning (SSL)~\cite{fixmatch,remixmatch,flexmatch} has been proposed and become a popular method to utilize the easier and cheaper acquired unlabeled data to improve the performance of DNN models. Its core idea is generating pseudo-labels for unlabeled data and selecting high-confidence ones to train the model together with the labeled data, so as to obtain a better model than using labeled data only. However, traditional SSL usually assumes that the class distributions of the labeled and unlabeled data are balanced and consistent, which is easily violated in real-world applications. Specifically, data typically exhibit a long-tailed distribution, and the class distribution of unlabeled data is not always the same as that of the labeled data, i.e., unlabeled data may exhibit any one of the long-tailed, uniform, or inverse long-tailed distribution, which further exacerbates the difficulty of model training~\cite{ltl}. This problem is known as long-tailed semi-supervised learning (LTSSL).

\textbf{A motivating example.} In the medical field, when collecting clinical data, we may obtain a long-tailed dataset from hospitals, i.e., many common disease cases (head classes) accompanied by very few rare disease cases (tail classes). However, the clinical data collected from a wide range of populations is unlabeled and characterized by an abundance of non-diseased individuals and a scarcity of diseased individuals, especially those with rare diseases. Thus, the unlabeled data distribution is mismatched with the labeled data distribution.

To mitigate the model bias arising from long-tailed distribution, long-tailed learning techniques, such as logit adjustment~\cite{la} and re-sampling~\cite{dlsa}, have been utilized in LTSSL to produce unbiased and high-quality pseudo-labels. Despite their effectiveness, they still cannot effectively solve the model bias resulting from distribution mismatch between labeled and unlabeled training data. Recently, ACR~\cite{acr} proposes to handle the mismatched unlabeled data with various distributions by incorporating adaptive logit adjustment. While this method is effective, it relies on pseudo-labels generated by a single classifier (expert), limiting its performance. In response to this limitation, CPE~\cite{cpe} suggests training three classifiers (experts) to handle unlabeled data across various class distributions. However, CPE still suffers from the following two drawbacks. First, it employs three experts to generate pseudo-labels simultaneously in the training phase, which may introduce more error pseudo-labels. Second, in the testing phase, it only employs the uniform expert for prediction, ignoring the different characteristics of three experts. 

\textbf{Motivation}: We argue that each expert has its strength and weakness, e.g., the long-tailed expert is good at handling the samples in the head classes but not the samples in the medium and tail classes, while the uniform expert excels in medium classes but not in tail and head classes, and we should use different experts to process samples from different intervals, i.e., ``a square peg in a square hole". To check this assumption, we first assume the class membership (i.e., head, medium, or tail class) of each sample is known. Then, we construct the Upper-E, which uses the long-tailed expert in CPE to produce pseudo-labels and predictions for head class samples in the training and testing phase, respectively, and the uniform and inverse long-tailed experts for medium and tail class samples. As shown in Fig.~\ref{fig:pseudo-label-f1} and Table~\ref{tab:each-expert-does-its-work-acc}, this strategy can largely improve the quality of pseudo-labels in the training phase, and the prediction accuracy in the testing phase across all cases, indicating that each expert has its expertise. This observation motivates us to design a module to accurately estimate the class membership of each sample to realize the prior of ``a square peg in a square hole".

To this end, in this paper, we propose a flexible and end-to-end LTSSL algorithm, namely Meta-Expert. To estimate the class membership of each sample, we propose a Dynamic Expert Assignment (DEA) module, which takes features from the encoder and logits from the three experts as input, to produce the probability (soft class membership) of assigning a specific expert to each sample. Subsequently, based on a multi-information fusion mechanism~\cite{dagc, mesc-net}, we integrate the expertises of these three experts according to the estimated probabilities to construct an aggregator. The aggregator ensures the long-tailed expert dominates pseudo-label generation in the training phase and prediction in the testing phase when the sample belongs to the head class, and the uniform and inverse long-tailed experts dominate when the sample belongs to the medium and tail classes, respectively. As shown in Fig.~\ref{fig:pseudo-label-f1}, the proposed method can produce significantly higher-quality pseudo-labels than CPE in the training phase. And as shown in Table~\ref{tab:each-expert-does-its-work-acc}, the proposed method can produce significantly higher prediction accuracies than CPE (i.e., employing the uniform expert for prediction) in the testing phase. Note that the proposed Meta-Expert integrates the logits from the three experts in a soft manner, pushing different experts to learn better. Thus, in several cases, it even outperforms CPE with the ground-truth class membership in pseudo-label generation in the training phase (Upper-E in Fig.~\ref{fig:pseudo-label-f1}) and prediction in the testing phase (Upper-E in Table~\ref{tab:each-expert-does-its-work-acc}).

\begin{table}[!t]
\caption{Comparison of accuracy (\%) on testing set using three different depth features and the corresponding performance gap (\%) between head and tail classes. We apply the K-means clustering on different depth features to produce the results, and we can clearly see that the feature is more biased towards the head class but more discriminative as the depth increases.}
\label{tab:each-depth-feature-does-its-work}
\centering
\resizebox{0.92\linewidth}{!}{
\begin{tabular}{c|c|ccc|c}
\toprule
Feature depth                           & Overall      & Head          & Medium          & Tail       & Gap       \\ \midrule
\shortstack[c]{\textit{Shallow}}        & 30.30        & 26.00         & 35.50           & 27.67      & 1.67      \\
\shortstack[c]{\textit{Middle}}         & 38.10        & 36.67         & 41.25           & 35.33      & 1.33      \\   
\shortstack[c]{\textit{Deep}}           & 71.00        & 84.67         & 77.25           & 49.00      & 35.67     \\ \bottomrule
\end{tabular}
}
\vskip -0.1in
\end{table}

The proposed Meta-Expert can produce better pseudo-labels and predictions. More importantly, our theoretical analysis confirms that integrating different experts' expertises reduces the model's generalization error, thereby enhancing its overall performance. However, the model is still naturally biased towards the head classes due to the scarcity of tail class samples. Fortunately, as shown in Table~\ref{tab:each-depth-feature-does-its-work}, we observed that shallow features are relatively balanced although less discriminative, and deep features improve the discriminative ability but are less balanced. This phenomenon aligns with the known behavior of deep networks: shallow layers capture local patterns while deep layers learn global semantics. For long-tailed learning, since head and tail classes may share similar local patterns, shallow features exhibit balanced discriminability across classes. Meanwhile, deep layers predominantly encode head class semantics due to their overwhelming sample dominance, thus biasing predictions toward head classes. Motivated by this observation, we further propose a Multi-depth Feature Fusion (MFF) module to mitigate the model bias towards the head class by fusing features across different depths to achieve both balanced and discriminative representation, which also echoes the wisdom of the proverb, ``a square peg in a square hole".

In summary, our \textbf{contributions} are as follows:

1. We demonstrate that the pseudo-label quality and prediction accuracy can be notably improved by incorporating the expertises of different experts. Motivated by the empirical guide, we propose the Dynamic Expert Assignment (DEA) module to assign experts to different samples based on their specific expertises.

2. We further theoretically show that leveraging different experts' strengths efficiently will bring a lower generalization error bound.

3. We are the first to discover that shallow depth features are less biased than deep ones, and propose the Multi-depth Feature Fusion (MFF) module to help deal with the model bias towards the head class.

4. We reach the new state-of-the-art (SOTA) performances on the popular LTSSL benchmarks under various settings.

\section{Related Work}
\label{sec:related-work}

\noindent\textbf{Semi-supervised learning} (SSL) uses both labeled and unlabeled training data to obtain a better model than using labeled training data only. Recent SSL methods are mostly based on consistency regularization, pseudo-labeling, or both. Consistency regularization methods~\cite{vat} are based on the manifold or smoothness assumption and apply consistency constraints to the final loss function. Pseudo-labeling methods~\cite{trinet} produce pseudo-labels for unlabeled training data according to the model’s high-confidence predictions and then use them to assist the model training. As a representative method of combining both of these techniques, FixMatch~\cite{fixmatch} encourages similar predictions between weak and strong augmentation views of an image, to improve model's performance and robustness. Afterward, many variants based on FixMatch have been proposed, such as FlexMatch~\cite{flexmatch}, FlatMatch~\cite{flatmatch}, SoftMatch~\cite{softmatch}, FreeMatch~\cite{freematch}, (FL)$^2$~\cite{flfl}, and WiseOpen~\cite{wiseopen}. Despite the superior performance of the above methods, they cannot effectively handle the case where labeled data exhibit a long-tailed distribution.

\noindent\textbf{Long-tailed semi-supervised learning} (LTSSL) has gained increased attention due to its high relevance to real-world applications. It takes both the long-tailed distribution in long-tailed learning (LTL) and the limited labeled training data in SSL into consideration, which makes it more realistic and challenging. Existing LTSSL methods primarily improve the model performance by introducing LTL techniques~\cite{conmix,ltpll,qast} to the off-the-shelf SSL methods like FixMatch~\cite{fixmatch}. For instance, ABC~\cite{abc}, CReST~\cite{crest}, BMB~\cite{bmb}, and RECD~\cite{recd} sample more tail class samples to balance training bias towards the head class. SAW~\cite{saw} introduces the class learning difficulty based weight to the consistency loss to enhance the model's robustness, INPL~\cite{inpl} proposes to select unlabeled data by the in-distribution probability, CDMAD~\cite{cdmad} proposes to refine pseudo-labels by the estimated classifier bias, CoSSL~\cite{cossl} introduces feature enhancement strategies to refine classifier learning, and ACR~\cite{acr} proposes to incorporate adaptive logit adjustment to handle unlabeled training data across various class distributions. Very recently, CPE~\cite{cpe} proposes to train multiple classifiers (experts) to handle unlabeled data across various distributions and further enhances the pseudo-label quality through an ensemble strategy. Although effective, it lacks a comprehensive strategy to utilize the expertise of each expert in pseudo-label generation and unseen sample prediction, leading to sub-optimal performance.

More related literature and discussions are detailed in Appendix~\ref{sec:more-literature-review-and-discussion}.

\section{Method}
\label{sec:method}

\subsection{Problem Statement}
\label{sec:problem-statement}

In long-tailed semi-supervised learning (LTSSL), we have a labeled training dataset $\mathcal{D}_l = \{x_i^l, y_i^l\}_{i = 1}^N$ of size $N$ and an unlabeled training dataset $\mathcal{D}_u = \{x_j^u\}_{j = 1}^M$ of size $M$, where $\mathcal{D}_l$ and $\mathcal{D}_u$ share the same feature and label space, $x_j^u$ is the $j^{th}$ unlabeled sample, $x_i^l$ is the $i^{th}$ labeled sample with a ground-truth label $y_i^l \in \{1, \dots, C\}$, and $C$ is the number of classes. Let ${N}_c$ denote the number of samples in the $c^{th}$ class of labeled dataset, we assume that the $C$ classes are sorted in descending order, i.e., ${N}_1 > {N}_2 > \dots > {N}_c$, thus its imbalance ratio can be denoted as ${\gamma}_l = N_1 / N_c$. As the label is inaccessible for unlabeled dataset, we denote the number of samples in its $c^{th}$ class by ${M}_c$, and define its imbalance ratio ${\gamma}_u$ in the same way as labeled dataset for theoretical illustration only. In this paper, we follow the previous works~\cite{acr,cpe} to consider three cases of unlabeled data distribution, i.e., consistent, uniform, and inverse. Specifically, i) for consistent setting, we have ${M}_1 > {M}_2 > \dots > {M}_c$ and ${\gamma}_u = {\gamma}_l$; ii) for uniform setting, we have ${M}_1 = {M}_2 = \dots = {M}_c$ and ${\gamma}_u = 1$; iii) for inverse setting, we have ${M}_1 < {M}_2 < \dots < {M}_c$ and ${\gamma}_u = 1 / \gamma_l$. The goal of LTSSL is to learn a classifier $F: \mathbb R^d \longmapsto \left[1, \dots, C \right]$ parameterized by $\theta$ on $\mathcal{D}_l$ and $\mathcal{D}_u$, that generalizes well on all classes.

\subsection{Proposed Framework}
\label{sec:proposed-framework}

\begin{figure*}[t]
\centering
{
\includegraphics[width=0.9\linewidth]{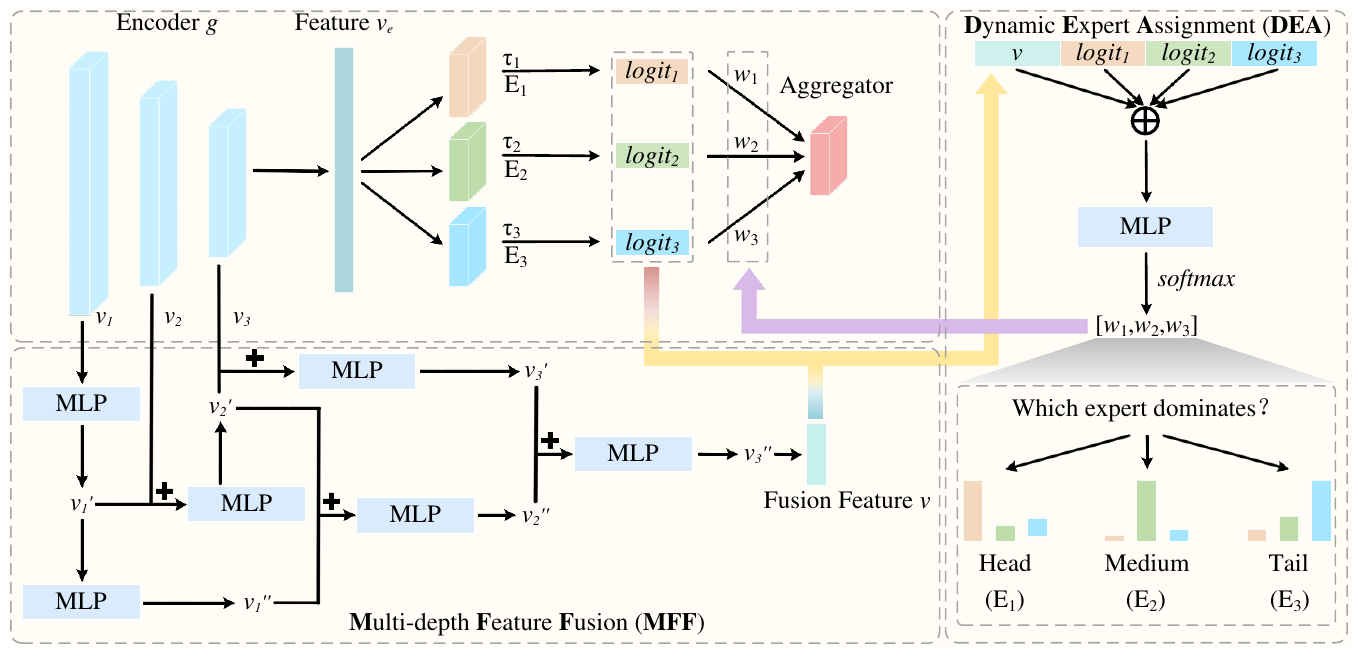}
}\vspace{-0.3cm}
\caption{Overview of our Meta-Expert algorithm. Meta-Expert leverages a DEA module to adaptively assigns suitable expert to each sample to generate pseudo-label in the training phase and make prediction in the testing phase, crucial for integrating expertises of different experts and constructing the aggregator. $E_1$, $E_2$ and $E_3$ denote the long-tailed, uniform, and inverse long-tailed experts, respectively, $\tau_k$ denotes the logit adjustment intensity for expert $E_k$ ($k \in \{1, 2, 3\}$), and ``\begin{tikzpicture} \draw[line width = 0.17em] (0, -0.24) -- (0, 0); \draw[line width = 0.17em] (-0.12, -0.12) -- (0.12, -0.12); \end{tikzpicture}" sign denotes adding different features.
}
\label{fig:Overview}
\end{figure*}

\paragraph{Multi-expert based LTSSL.} As the class distribution of the unlabeled training data may be inconsistent with that of the labeled ones, we follow the previous work~\cite{cpe} to train three experts to handle the unlabeled training data across various class distributions, i.e., long-tailed, uniform, and inverse long-tailed distributions. Specifically, we attach two auxiliary classifiers on a typical SSL method like FixMatch~\cite{fixmatch}, and train each classifier (expert) with a specific logit adjustment intensity to realize that the first (long-tailed) expert is skilled in long-tailed distribution, and the second (uniform) and third (inverse long-tailed) experts are skilled in uniform and inverse ones, respectively. Similar to FixMatch, the loss $L_{base}$ for the base LTSSL includes a supervised classification loss on the labeled data and an unsupervised consistency regularization loss on the unlabeled data , i.e.,
\begin{equation}
\resizebox{0.91\hsize}{!}{$
\begin{aligned}
L_{base} = & \sum \nolimits_{k = 1}^{Q} \frac{1}{B_l} \sum \nolimits_{i = 1}^{B_l} \ell \big(E_k \big(g(x_i^l) \big) + \tau_k \log \pi, \;y_i \big) \\
+ & \sum \nolimits_{k = 1}^{Q} \frac{1}{B_u} \sum \nolimits_{j = 1}^{B_u} \ell \big(E_k \big(g(x_j^u) \big), \; \hat{y}_{j, k} \big) \mathbb{I},
\end{aligned}
$}
\label{eq:base}
\end{equation}
where $Q = 3$ denotes the number of classifiers (experts), $B_l$ and $B_u$ denote the batch sizes of labeled and unlabeled data, respectively, $\ell$ denotes the cross-entropy loss, $x_i^l$ and $x_j^u$ denote the $i^{th}$ labeled and $j^{th}$ unlabeled samples in the current batch, respectively, $g$ denotes the encoder, $\pi$ denotes the label frequency of the labeled data, $E_k$ denotes the expert trained by cross-entropy loss with a specific logit adjustment intensity $\tau_k$, $\hat{y}_{j, k}$ denotes the pseudo-label predicted by $E_k$ on the $j^{th}$ unlabeled sample in the current batch, and $\mathbb{I}$ denotes a binary sample mask to select samples with confidence larger than the threshold $t$. The first term in Eq.~\ref{eq:base} is the supervised classification loss on the labeled data, and the second term defines the unsupervised consistency regularization loss on the unlabeled data.

\paragraph{Dynamic expert assignment.} As shown in Fig.~\ref{fig:pseudo-label-f1} and Table~\ref{tab:each-expert-does-its-work-acc}, we observe that each expert has its strength and weakness, i.e., long-tailed expert is skilled in handling head class samples but not medium and tail class samples, while the uniform and inverse long-tailed experts are skilled in handling medium and tail class samples, respectively. Based on this observation, we propose to gather the strengths of different experts. To this end, we first propose a dynamic expert assignment (DEA) module to estimate the class membership (i.e., head, medium, or tail class) of each sample. 

As shown in Fig.~\ref{fig:Overview}, the DEA module adopts a multilayer perceptron (MLP) architecture and takes the feature from the encoder and logits from the three experts as input and outputs the soft class membership for each sample, i.e., $w = DEA([v, z_1, z_2, z_3])$, where $w = [w_1, w_2, w_3]$ denotes the probability of assigning each expert to produce pseudo-label and make prediction, $v$ and $z_k|_{k = 1}^3$ denote the feature and logit generated by the encoder $g$ and expert $E_k|_{k = 1}^3$ on the sample $x$, respectively. The parameters of the DEA module can be inferred by minimizing the following DEA loss $L_{dea}$,
\begin{equation}
\resizebox{0.91\hsize}{!}{$
\begin{aligned}
L_{dea} = \frac{1}{B_l} \sum \nolimits_{i = 1}^{B_l} \ell \big(w^l_i, \;s_i \big) + \frac{1}{B_u} \sum \nolimits_{j = 1}^{B_u} \ell \big(w^u_j, \; \hat s_j \big) \mathbb{I},
\end{aligned}
\label{eq:dea}
$}
\end{equation}
where $w^l_i$ and $w^u_j$ denote the probabilities of assigning each expert to the $i^{th}$ labeled and $j^{th}$ unlabeled samples in the current batch, respectively, $s_i$ denotes the ground-truth class membership of the $i^{th}$ labeled sample in the current batch, and $\hat s_j$ denotes the pseudo class membership of the $j^{th}$ unlabeled sample in the current batch.
  
\paragraph{Aggregator.} Subsequently, we construct an aggregator. The aggregator integrates the expertises of three experts through a weighted summation of their respective logits based on the estimated class membership $w$ by the DEA module, i.e.,
\begin{equation}
\begin{aligned}
y_m = \sigma \Big(\sum \nolimits_{k = 1}^{Q} w_k z_k \Big),
\end{aligned}
\label{eq:aggregator}
\end{equation}
where $y_m$ denotes the soft prediction produced by the aggregator and $\sigma(\cdot)$ denotes the softmax function. As the class membership estimated by the DEA module can reflect the class membership of each sample, aggregator in Eq.~\ref{eq:aggregator} ensures that the long-tailed expert dominates the pseudo-label generation in the training phase and prediction in the testing phase when the sample belongs to the head class, and the uniform and inverse long-tailed experts dominate when the sample belongs to the medium and tail classes, respectively. 

Then, the META loss $L_{meta}$ for optimizing the overall network parameters based on the output of the aggregator is formulated as:
\begin{equation}
\resizebox{0.91\hsize}{!}{$
\begin{aligned}
L_{meta} = \frac{1}{B_l} \sum \nolimits_{i = 1}^{B_l} \ell \big(y^l_{m, i}, \;y_{i} \big) + \frac{1}{B_u} \sum \nolimits_{j = 1}^{B_u} \ell \big(y^u_{m, j}, \; \hat y_{j} \big) \mathbb{I},
\end{aligned}
$}
\label{eq:meta}
\end{equation}
where $y^l_{m, i}$ and $y^u_{m, j}$ denote the soft prediction produced by the aggregator on the $i^{th}$ labeled and $j^{th}$ unlabeled samples in the current batch, respectively, $\hat y_j$ denotes the pseudo-label produced by the aggregator on the $j^{th}$ unlabeled sample in the current batch. As shown in Fig.~\ref{fig:pseudo-label-f1} and Table~\ref{tab:each-expert-does-its-work-acc}, our proposed method can integrate the expertises of all experts and generate high-quality pseudo-labels and predictions in the training and testing phase, respectively.

\paragraph{Multi-depth feature fusion.} Although the proposed method is effective in integrating the expertise of each expert, the model is still naturally biased towards the head class due to the scarcity of tail class samples. Fortunately, as shown in Table~\ref{tab:each-depth-feature-does-its-work}, we observe that different depth features have different bias intensities, i.e., shallow features are relatively balanced although less discriminative, and deep features are more discriminative and more biased. Such a phenomenon motivates us to use multiple depth features to learn a representation with a good trade-off between bias and discriminative ability.

Specifically, we propose a multi-depth feature fusion (MFF) module to fuse different depth features. As shown in Fig.~\ref{fig:Overview}, the MFF module involves several MLP layers and takes different depth features from the encoder as input and outputs the fusion feature $v$, i.e., $MFF(v_1, v_2, v_3) \longmapsto v$, where $v_1$, $v_2$ and $v_3$ denote the shallow, medium and deep features, respectively. Specifically, for shallow depth features $v_1$ and medium ones $v_2$, we first align their dimensions using an MLP and add them together. Subsequently, the resulting feature will be added to the subsequent depth's feature set to perform similar operations. Beside the addition operation, we can also concatenate different depth features in the MFF module, which is investigated in Sec.~\ref{sec:ablation-study}. The parameters of the MFF module are updated by the end-to-end training.

\subsection{Model Training and Prediction}
\label{sec:model-training-and-prediction}

In the training phase, we warm up the model by eighteen epochs with the base loss $L_{base}$ in Eq.~\ref{eq:base}. Then, we take the fusion feature $v$ from the MFF module and logit $z_k$ from the expert $E_k$ ($k \in \{1, 2, 3\}$) to calculate the DEA loss $L_{dea}$ in Eq.~\ref{eq:dea} and META loss $L_{meta}$ in Eq.~\ref{eq:meta}, and together with the base loss $L_{base}$ to jointly optimize the model. The overall loss $L_{overall}$ is formulated as:
\begin{equation}
\begin{aligned}
L_{overall} = L_{base} + L_{dea} + L_{meta}.
\end{aligned}
\label{eq:overall}
\end{equation}

The whole framework of our method is shown in Fig.~\ref{fig:Overview}. The pseudo-code is summarized in Alg.~\ref{alg:algorithm}. After training, we could obtain the predicted label of an unseen sample $x^*$ by selecting the label index with the highest confidence by $y_m$ in Eq.~\ref{eq:aggregator}. Our code is made available\footnote{\url{https://github.com/yaxinhou/Meta-Expert}}.

\subsection{Theoretical Analysis}
\label{sec:theoretical-analysis}

We provide the generalization error bound for our Meta-Expert to analyze the factors that affect the model's generalization ability. Before providing the main results, we first define the true risk with respect to the classification model $f(x; \theta)$ as follows:
\begin{equation}
\begin{aligned}
R(f) = \mathbb{E}_{(x, y)} \left[\ell(f(x), y)\right].
\end{aligned}
\end{equation}

\begin{algorithm}[t]
\caption{Training Process of the Proposed Method}
\label{alg:algorithm}
\begin{algorithmic}[1]
\STATE \textbf{Input}: Labeled training dataset $\mathcal{D}_l$, unlabeled training dataset $\mathcal{D}_u$, hyper-parameter $t$.
\STATE \textbf{Output}: Encoder $g$, long-tailed expert $E_1$, uniform expert $E_2$, inverse long-tailed expert $E_3$, and parameters of DEA module $W_{dea}$ and MFF module $W_{mff}$.
\STATE Initialize the parameters of $g$, $E_1$, $E_2$, $E_3$, $W_{dea}$ and $W_{mff}$ randomly.
\FOR{epoch=1, 2, \dots}
\FOR{batch=1, 2, \dots}
\STATE Get expert $E_k$ prediction $z_k$ on a batch of data $B$;
\STATE Calculate the base loss by Eq.~\ref{eq:base};
\IF {warm up ended}
\STATE Calculate the loss for DEA module by Eq.~\ref{eq:dea};
\STATE Integrate the expertises of different experts by Eq.~\ref{eq:aggregator};
\STATE Calculate the loss for optimizing the overall network parameters based on the integrated expertises by Eq.~\ref{eq:meta};
\STATE Obtain the overall loss by Eq.~\ref{eq:overall};
\ENDIF
\STATE Update network parameters via gradient descent;
\ENDFOR
\ENDFOR
\end{algorithmic}
\end{algorithm}

\begin{definition}
\label{def:definition}

We aim to learn a good classification model by minimizing the empirical risk $\widehat R(f) = \widehat R_l(f) + \widehat R_u(f)$, where $\widehat R_l(f) = \frac{1}{N} \sum_{i = 1}^{N} \ell (f(x_i), y_i)$ and $\widehat R_u(f) = \frac{1}{M} \sum_{j = 1}^{M} \ell (f(x_j), y_j)$ denote the empirical risk on labeled and unlabeled training data, respectively. In SSL, we cannot minimize the empirical risk $\widehat R_u(f)$ directly, since the ground-truth label is inaccessible for unlabeled training data. Therefore, we need to train the model with $\widehat R_u^{\prime}(f) = \frac{1}{\hat M} \sum_{j = 1}^{\hat M} \ell (f(x_j), \hat y_j)$, where $\hat M$ denotes the number of selected high-confidence unlabeled samples and $\hat y_j$ denotes the pseudo-label of unlabeled sample $x_j$.

Let $\ell_{SSL} = \frac{1}{N} \sum_{i = 1}^{N} \ell (f(x_i), y_i) + \frac{1}{\hat M} \sum_{j = 1}^{\hat M} \ell (f(x_j), \hat y_j)$ be the loss for SSL, where $\hat M$ denotes the number of selected high-confidence unlabeled samples. Let the function space $\mathcal{H}_y$ for the label $y \in \{1, \dots, C\}$ be $\{h:x \longmapsto f_y(x) | f \in \mathcal{F}\}$, where $f_y(x)$ denotes the predicted probability of the $y^{th}$ class. Let $\mathcal{R}_{O}(\mathcal{H}_y)$ be the expected Rademacher complexity~\cite{fml} of $\mathcal{H}_y$ with $O = N + M$ training samples (including $N$ labeled and $M$ unlabeled training samples). Let $\hat f = \operatorname{argmin}_{f \in \mathcal{F}} \widehat R(f)$ be the empirical risk minimizer, and $f^* = \operatorname{argmin}_{f \in \mathcal{F}} R(f)$ be the true risk minimizer. Then we have the following theorem.

\end{definition}

\begin{theorem}[Generalization Error Bound]
\label{thm:generalization-error-bound}

Suppose that the loss function $\ell(f(x), y)$ is $\rho$-Lipschitz with respect to $f(x)$ for all $y \in \{1, \dots, C\}$ and upper-bounded by $U$. Given the class membership $\eta \in \{1, \dots, Q\}$ and overall pseudo-labeling error $\epsilon > 0$, if $\frac{1}{M} \sum \nolimits_{j = 1}^{M} \sum_{k = 1}^{Q} \mathbb{I} (\eta_{j, k} = 1) |\mathbb{I} \big(max \big(f_k(x_j) \big) > t \big) - \mathbb{I} (\hat y_{j} = y_{j}) | \leq \epsilon$, for any $\delta > 0$, with probability at least $1 - \delta$, we have:
\begin{equation}
\resizebox{0.91\hsize}{!}{$
\begin{aligned}
R(\hat f) - R(f^*) \leq 2 U \epsilon + 4 \sqrt{2} \rho \sum_{y = 1}^{C} \mathcal{R}_O(\mathcal{H}_y) + 2 U \sqrt{\frac{\log \frac{2}{\delta}}{2O}}.
\end{aligned}
$}
\end{equation}

\end{theorem}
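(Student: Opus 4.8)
The plan is to run the classical agnostic excess-risk decomposition, but to route it through the pseudo-label objective $\ell_{SSL}$ that is actually minimized, so that the minimizer's optimality can be used while the true-versus-pseudo label discrepancy is isolated. I read $\hat f$ as the minimizer of the pseudo-labeled empirical objective $\ell_{SSL}(f)$ (this is what makes an $\epsilon$-term appear at all; an ERM over true labels would produce no such term), and $f^*$ as the minimizer of $R$. I would begin from
\begin{equation}
R(\hat f) - R(f^*) = \big[R(\hat f) - \ell_{SSL}(\hat f)\big] + \big[\ell_{SSL}(\hat f) - \ell_{SSL}(f^*)\big] + \big[\ell_{SSL}(f^*) - R(f^*)\big],
\end{equation}
discard the middle bracket (nonpositive by optimality of $\hat f$), and split each surviving bracket once more through the true-label empirical risk $\widehat R(f)$, turning each into a generalization gap $R(f)-\widehat R(f)$ plus a pseudo-label gap $\widehat R(f)-\ell_{SSL}(f)$. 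This yields
\begin{equation}
R(\hat f) - R(f^*) \le 2\sup_{f\in\mathcal F}\big|R(f) - \widehat R(f)\big| + 2\sup_{f\in\mathcal F}\big|\widehat R(f) - \ell_{SSL}(f)\big|.
\end{equation}

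Next I would control the pseudo-label gap. Since each unlabeled example lies in exactly one expert interval, $\sum_k \mathbb I(\eta_{j,k}=1)=1$, and the assumed inequality states that the fraction of unlabeled examples on which the confidence selector $\mathbb I(\max(f_k(x_j))>t)$ disagrees with the correctness indicator $\mathbb I(\hat y_j=y_j)$ is at most $\epsilon$. Comparing $\widehat R(f)$ with $\ell_{SSL}(f)$ term by term, an example selected with a correct pseudo-label contributes zero, an unselected example contributes zero to the masked pseudo-labeled term, and only an example selected with an \emph{incorrect} pseudo-label produces a nonzero discrepancy, each at most $U$ because $0\le\ell\le U$. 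As these examples are among those counted by the error quantity, averaging gives $\sup_f|\widehat R(f)-\ell_{SSL}(f)|\le U\epsilon$, hence the $2U\epsilon$ term.

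For the generalization gap I would use the standard two-step argument. Viewing $\widehat R(f)$ as a function of the $O=N+M$ training examples, replacing any single example changes it by at most $U/O$, so McDiarmid's bounded-difference inequality together with ghost-sample symmetrization gives, with probability at least $1-\delta$, the uniform bound $\sup_f|R(f)-\widehat R(f)|\le 2\,\mathcal R_O(\ell\circ\mathcal F)+U\sqrt{\log(2/\delta)/(2O)}$, the $\log(2/\delta)$ arising from the two-sided supremum. Then I would strip the loss off the complexity term through the vector-valued (Talagrand/Maurer) contraction: since $\ell(\cdot,y)$ is $\rho$-Lipschitz in its $\mathbb R^C$-valued argument for every $y$, one gets $\mathcal R_O(\ell\circ\mathcal F)\le \sqrt2\,\rho\sum_{y=1}^C \mathcal R_O(\mathcal H_y)$, where the $\sqrt2$ and the sum over classes are exactly what the multi-class contraction produces. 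Substituting both estimates into the decomposition assembles the claimed $2U\epsilon$, $4\sqrt2\,\rho\sum_y\mathcal R_O(\mathcal H_y)$, and $2U\sqrt{\log(2/\delta)/(2O)}$.

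The step I expect to be the main obstacle is the pseudo-label gap: translating the indicator-based error quantity (with its membership weights $\eta_{j,k}$, the threshold selector $\mathbb I(\max(f_k(x_j))>t)$, and the correctness indicator) into a clean per-example loss discrepancy. The two failure modes, a confident-but-wrong pseudo-label versus a correct label that is thresholded away, and the differing normalizations ($M$ versus $\hat M$) in the masked sums, mean that reconciling the masked pseudo-risk that is optimized with the full-sample Rademacher complexity over $O=N+M$ requires careful bookkeeping to land the crisp constant $U\epsilon$; the remaining pieces (decomposition, McDiarmid, symmetrization, contraction) are routine once this is fixed.
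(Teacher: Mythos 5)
Your proposal matches the paper's proof essentially step for step: the same excess-risk decomposition routed through the pseudo-labeled objective (using optimality of $\hat f$ for $\ell_{SSL}$), the same McDiarmid-plus-symmetrization-plus-vector-contraction uniform deviation bound (the paper's Lemma~\ref{lem:1}), and the same $U\epsilon$ bound on the gap between the true-label and pseudo-label empirical risks (the paper's Lemma~\ref{lem:2}), each invoked twice to produce the factors of $2$. Your reading of $\hat f$ as the minimizer of the pseudo-labeled objective rather than of $\widehat R$ --- and your flagging of the $M$ versus $\hat M$ normalization and the two indicator failure modes as the delicate bookkeeping point --- correspond exactly to what the paper's argument relies on (and where its Lemma~\ref{lem:2} is loosest).
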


The proof of Theorem~\ref{thm:generalization-error-bound} is provided in Appendix~\ref{sec:proof-of-theorem-generalization-error-bound}. It can be observed that the generalization performance of $\hat f$ mainly depends on two factors, i.e., the overall pseudo-labeling error $\epsilon$ and the number of training samples $O$. As $O \rightarrow \infty, \epsilon \rightarrow 0$, Theorem~\ref{thm:generalization-error-bound} shows that the empirical risk minimizer $\hat f$ will get closer to the true risk minimizer $f^*$. In CPE, the overall pseudo-labeling error is defined as $\epsilon_{CPE} = \frac{1}{Q^2} \sum_{i = 1}^{Q} \sum_{j = 1}^{Q} \epsilon_{i, j}$, where $\epsilon_{i, j}$ denotes the pseudo-labeling error of the $i^{th}$ expert on the unlabeled samples with class membership $\eta_j = 1$. While the counterpart of our Meta-Expert is defined as $\epsilon_{Ours} = \frac{1}{Q} \sum_{i = 1}^{Q} \sum_{j = 1}^{Q} \mathbb{I}_{i = j} \epsilon_{i, j}$, where $\mathbb{I}_{i = j}$ denotes a binary expert mask to assign each expert to select high-confidence unlabeled samples located in its skilled interval. As illustrated in Table~\ref{tab:pseudo-label-acc}, compared with CPE in the consistent case, with the estimated class membership by the DEA module, our Meta-Expert achieves a smaller overall pseudo-labeling error (from 30.07 percentage points (pp) reduced to 21.17 pp) and higher unlabeled data utilization ratio (from 84.86 pp improved to 95.31 pp), and similar conclusions can be observed in uniform and inverse cases, which are beneficial for obtaining a smaller generalization error bound.

\begin{table}[!t]
\caption{Pseudo-labeling error rate (i.e., $\epsilon$) (\%) and utilization ratio (i.e, $\hat M / M$) (\%) under three different unlabeled data distributions with varying experts. In CPE~\cite{cpe}, $E_2$ denotes uniform expert, while $E_1$ and $E_3$ denote long-tailed and inverse long-tailed experts, respectively. Our proposed method uses the DEA module to select a specific expert. The dataset is CIFAR-10-LT with imbalance ratio $\gamma_l = 200$.}
\label{tab:pseudo-label-acc}
\centering
\resizebox{0.92\linewidth}{!}{
\begin{tabular}{cccccc|c}
\toprule
Distribution                       & Expert                      & Head                 & Medium                 & Tail              & $\epsilon$        & $\hat M / M$ \\ \midrule
\multirow{5}{*}{\shortstack[c]{Consistent}}
                                   & $E_1$                       & 9.01                 & 20.90                  & 43.04             & 23.98             & 94.90 \\
                                   & $E_2$                       & 26.82                & 20.85                  & 22.35             & 23.09             & 64.30 \\
                                   & $E_3$                       & 92.15                & 21.47                  & 23.04             & 43.14             & 95.37 \\ \cmidrule(l){2-7}
                                   & CPE                         & 42.66                & 21.07                  & 29.48             & 30.07             & 84.86 \\
                                   & Ours                        & 21.79                & 18.67                  & 23.87             & \textbf{21.17}    & \textbf{95.31} \\ \midrule
\multirow{5}{*}{\shortstack[c]{Uniform}}
                                   & $E_1$                       & 7.56                 & 24.50                  & 25.22            & 19.63              & 95.73 \\
                                   & $E_2$                       & 13.67                & 24.58                  & 11.00            & 17.23              & 96.57 \\
                                   & $E_3$                       & 97.44                & 25.33                  & 9.33             & 42.17              & 98.80 \\ \cmidrule(l){2-7}
                                   & CPE                         & 39.56                & 24.81                  & 15.19            & 26.34              & 97.03 \\
                                   & Ours                        & 12.89                & 24.92                  & 15.11            & \textbf{18.37}     & \textbf{98.44} \\ \midrule
\multirow{5}{*}{\shortstack[c]{Inverse}}
                                   & $E_1$                       & 6.10                 & 22.18                  & 22.97            & 17.59              & 84.23 \\
                                   & $E_2$                       & 15.79                & 21.13                  & 9.03             & 15.90              & 93.39 \\
                                   & $E_3$                       & 40.39                & 22.48                  & 5.67             & 22.81              & 96.55 \\ \cmidrule(l){2-7}
                                   & CPE                         & 20.76                & 21.93                  & 12.56            & 18.77              & 91.39 \\
                                   & Ours                        & 11.08                & 19.91                  & 8.59             & \textbf{13.87}     & \textbf{93.75} \\ \bottomrule
\end{tabular}
} 
\vskip -0.1in
\end{table}

\section{Experiments}
\label{sec:experiments}

\begin{table*}[!t]
\caption{Comparison of accuracy (\%) on CIFAR-10-LT under $\gamma_l = \gamma_u$ and $\gamma_l \neq \gamma_u$ settings. We set $\gamma_l$ to $150$ and $200$ for CIFAR-10-LT. We use \textbf{bold} to mark the best results.}
\label{tab:tb1}
\centering
\resizebox{0.95\linewidth}{!}{
\begin{tabular}{l|cccc|cccc}
\toprule
& \multicolumn{4}{c|}{CIFAR-10-LT} & \multicolumn{4}{c}{CIFAR-10-LT} \\ \cmidrule(l){2-9}
& \multicolumn{2}{c|}{$N_1=500,M_1=4000$} & \multicolumn{2}{c|}{$N_1=1500,M_1=3000$} & \multicolumn{2}{c|}{$N_1=1500,M_c=3000$} & \multicolumn{2}{c}{$N_1=1500,M_c=3000$} \\ \cmidrule(l){2-9}
& $\gamma_l=200$ & \multicolumn{1}{c|}{$\gamma_l=150$} & $\gamma_l=200$ & $\gamma_l=150$ & $\gamma_l=200$ & \multicolumn{1}{c|} {$\gamma_l=200$} & $\gamma_l=150$ & $\gamma_l=150$ \\
\multirow{-4}{*}{Algorithm} & $\gamma_u=200$ & \multicolumn{1}{c|}{$\gamma_u=150$} & $\gamma_u=200$ & $\gamma_u=150$ & $\gamma_u=1$ & \multicolumn{1}{c|}{$\gamma_u=1/200$} & $\gamma_u=1$ & $\gamma_u=1/150$ \\ \midrule
Supervised            & \ms{41.15}{1.15} & \multicolumn{1}{c|}{\ms{43.88}{1.61}} & \ms{56.83}{1.10} & \ms{59.82}{0.32} & \ms{56.83}{1.10} & \multicolumn{1}{c|}{\ms{56.83}{1.10}} & \ms{59.82}{0.32} & \ms{59.82}{0.32}   \\ \midrule
FixMatch (NIPS~20)    & \ms{61.74}{0.81} & \multicolumn{1}{c|}{\ms{65.68}{0.67}} & \ms{69.39}{0.56} & \ms{72.15}{0.94} & \ms{65.59}{0.18} & \multicolumn{1}{c|}{\ms{63.98}{0.36}} & \ms{69.07}{0.74} & \ms{65.24}{0.63}   \\
w~/~SAW (ICML~22)     & \ms{61.22}{4.11} & \multicolumn{1}{c|}{\ms{68.51}{0.16}} & \ms{74.15}{1.52} & \ms{77.67}{0.14} & \ms{78.60}{0.23} & \multicolumn{1}{c|}{\ms{70.55}{0.48}} & \ms{80.02}{0.50} & \ms{73.67}{0.50}   \\
w~/~Adsh (ICML~22)    & \ms{62.04}{1.31} & \multicolumn{1}{c|}{\ms{66.55}{2.94}} & \ms{67.13}{0.39} & \ms{73.96}{0.47} & \ms{71.06}{0.77} & \multicolumn{1}{c|}{\ms{65.68}{0.44}} & \ms{73.65}{0.36} & \ms{66.51}{0.69}   \\
w~/~DePL (CVPR~22)    & \ms{69.21}{0.62} & \multicolumn{1}{c|}{\ms{71.95}{2.54}} & \ms{73.23}{0.62} & \ms{76.58}{0.12} & \ms{73.26}{0.46} & \multicolumn{1}{c|}{\ms{69.35}{0.26}} & \ms{75.62}{0.86} & \ms{71.23}{0.54}   \\
w~/~ACR (CVPR~23)     & \ms{71.92}{0.94} & \multicolumn{1}{c|}{\ms{76.72}{1.13}} & \ms{79.96}{0.85} & \ms{81.81}{0.49} & \ms{81.18}{0.73} & \multicolumn{1}{c|}{\ms{81.23}{0.59}} & \ms{83.46}{0.42} & \ms{84.63}{0.66}   \\
w~/~BaCon (AAAI~24)   & \ms{66.41}{0.31} & \multicolumn{1}{c|}{\ms{71.33}{1.75}} & \ms{78.64}{0.40} & \ms{81.63}{0.44} & \ms{77.89}{0.97} & \multicolumn{1}{c|}{\ms{81.87}{0.16}} & \ms{82.05}{1.09} & \ms{83.30}{1.12}   \\
w~/~CPE (AAAI~24)     & \ms{67.45}{1.27} & \multicolumn{1}{c|}{\ms{76.77}{0.53}} & \ms{78.12}{0.34} & \ms{82.25}{0.34} & \ms{83.46}{0.03} & \multicolumn{1}{c|}{\ms{84.07}{0.40}} & \ms{84.50}{0.40} & \ms{85.52}{0.02}   \\
w~/~SimPro (ICML~24)  & \ms{59.94}{0.73} & \multicolumn{1}{c|}{\ms{65.54}{3.17}} & \ms{75.37}{0.74} & \ms{77.18}{0.38} & \ms{73.05}{0.35} & \multicolumn{1}{c|}{\ms{75.33}{2.85}} & \ms{76.12}{1.11} & \ms{79.42}{2.78}   \\
\rowcolor{black!10}w~/~Meta-Expert (Ours) & \msb{74.39}{0.46} & \multicolumn{1}{c|}{\msb{77.19}{0.58}} &    \msb{80.63}{0.83} & \msb{82.52}{0.40} & \msb{83.90}{0.11} & \multicolumn{1}{c|}{\msb{85.71}{0.03}} & \msb{84.91}{0.14}  & \msb{86.78}{0.31} \\ \bottomrule
\end{tabular}
}
\vskip -0.1in
\end{table*}

\begin{table*}[!t]
\caption{Comparison of accuracy (\%) on STL-10-LT and SVHN-LT under $\gamma_l = \gamma_u$ and $\gamma_l \neq \gamma_u$ settings. We set $\gamma_l$ to $15$ and $20$ for STL-10-LT, and $\gamma_l$ to $150$ and $200$ for SVHN-LT. We use \textbf{bold} to mark the best results. $N/A$ denotes the unknown $\gamma_u$ in STL-10-LT since its inaccessible ground-truth label for unlabeled dataset.}
\label{tab:tb2}
\centering
\resizebox{0.95\linewidth}{!}{
\begin{tabular}{l|cccc|cccc}
\toprule
& \multicolumn{4}{c|}{STL-10-LT} & \multicolumn{4}{c}{SVHN-LT}  \\ \cmidrule(l){2-9}
& \multicolumn{2}{c|}{$N_1=150,M_1=100k$} & \multicolumn{2}{c|}{$N_1=450,M_1=100k$} & \multicolumn{2}{c|}{$N_1=1500,M_1=3000$} & \multicolumn{2}{c}{$N_1=1500,M_c=3000$}  \\ \cmidrule(l){2-9}
& $\gamma_l=20$ & \multicolumn{1}{c|} {$\gamma_l=15$} & $\gamma_l=20$ & $\gamma_l=15$ & $\gamma_l=200$ & \multicolumn{1}{c|}{$\gamma_l=150$} & $\gamma_l=150$ & $\gamma_l=150$  \\
\multirow{-4}{*}{Algorithm} & $\gamma_u=N/A$ & \multicolumn{1}{c|}{$\gamma_u=N/A$} & $\gamma_u=N/A$ & $\gamma_u=N/A$ & $\gamma_u=200$ & \multicolumn{1}{c|}{$\gamma_u=150$} & $\gamma_u=1$ & $\gamma_u=1/150$  \\ \midrule
Supervised           & \ms{40.44}{1.42} & \multicolumn{1}{c|}{\ms{42.31}{0.42}} & \ms{56.56}{1.07} & \ms{59.81}{0.45} & \ms{84.10}{0.05} & \multicolumn{1}{c|}{\ms{86.14}{0.50}} & \ms{86.14}{0.50} & \ms{86.14}{0.50}   \\ \midrule
FixMatch (NIPS~20)   & \ms{56.12}{1.38} & \multicolumn{1}{c|}{\ms{60.63}{0.92}} & \ms{68.33}{0.80} & \ms{71.55}{0.74} & \ms{91.36}{0.15} & \multicolumn{1}{c|}{\ms{91.99}{0.18}} & \ms{93.94}{0.79} & \ms{90.25}{2.45}   \\
w~/~SAW (ICML~22)    & \ms{66.62}{0.34} & \multicolumn{1}{c|}{\ms{67.00}{0.79}} & \ms{74.59}{0.13} & \ms{75.58}{0.28} & \ms{92.17}{0.10} & \multicolumn{1}{c|}{\ms{92.27}{0.01}} & \ms{94.41}{0.38} & \ms{91.42}{0.41}   \\
w~/~Adsh (ICML~22)   & \ms{66.56}{0.61} & \multicolumn{1}{c|}{\ms{66.72}{0.32}} & \ms{72.95}{0.45} & \ms{74.28}{0.24} & \ms{90.87}{0.32} & \multicolumn{1}{c|}{\ms{91.68}{0.25}} & \ms{94.04}{0.68} & \ms{88.71}{0.52}   \\
w~/~DePL (CVPR~22)   & \ms{66.10}{0.63} & \multicolumn{1}{c|}{\ms{67.02}{0.89}} & \ms{73.43}{0.11} & \ms{74.55}{0.14} & \ms{92.16}{0.16} & \multicolumn{1}{c|}{\ms{92.85}{0.04}} & \ms{94.12}{0.63} & \ms{87.86}{0.50}   \\
w~/~ACR (CVPR~23)    & \ms{69.24}{0.95} & \multicolumn{1}{c|}{\ms{68.74}{0.95}} & \ms{78.13}{0.29} & \ms{78.55}{0.50} & \ms{92.90}{0.40} & \multicolumn{1}{c|}{\ms{93.52}{0.32}} & \ms{91.11}{0.17} & \ms{92.03}{0.34}   \\
w~/~BaCon (AAAI~24)  & \ms{66.68}{0.38} & \multicolumn{1}{c|}{\ms{68.26}{1.16}} & \ms{77.29}{0.23} & \ms{77.73}{0.40} & \ms{93.30}{0.15} & \multicolumn{1}{c|}{\ms{93.94}{0.21}} & \ms{94.54}{0.42} & \ms{93.69}{0.41}   \\
w~/~CPE (AAAI~24)    & \ms{68.01}{0.65} & \multicolumn{1}{c|}{\ms{67.07}{1.72}} & \ms{78.02}{0.14} & \ms{78.71}{0.24} & \ms{85.79}{0.54} & \multicolumn{1}{c|}{\ms{86.31}{0.05}} & \ms{94.14}{0.24} & \ms{93.06}{0.34}   \\
w~/~SimPro (ICML~24) & \ms{43.65}{0.55} & \multicolumn{1}{c|}{\ms{44.45}{0.98}} & \ms{57.23}{1.43} & \ms{60.33}{0.59} & \ms{92.51}{0.71} & \multicolumn{1}{c|}{\ms{93.94}{0.10}} & \ms{94.59}{0.28} & \msb{94.76}{0.41}   \\
\rowcolor{black!10}w~/~Meta-Expert (Ours) &    \msb{71.19}{0.07} & \multicolumn{1}{c|}{\msb{69.23}{0.82}} & \msb{80.18}{1.21} & \msb{79.98}{0.33} & \msb{93.56}{0.09} & \multicolumn{1}{c|}{\msb{93.99}{0.07}} & \msb{94.66}{0.23}  & \ms{94.24}{0.19} \\ \bottomrule
\end{tabular}
}
\vskip -0.1in
\end{table*}

\subsection{Experimental Setting}
\label{sec:experimental-setting}

\paragraph{Dataset.} We perform our experiments on three widely-used datasets for the LTSSL task, including CIFAR-10-LT~\cite{cifar}, SVHN-LT~\cite{svhn}, and STL-10-LT~\cite{stl}. We follow the dataset settings in ACR~\cite{acr} and CPE~\cite{cpe}, details are as below.

${\cdot}$ \emph{CIFAR-10-LT}: We test with four settings in the consistent case: $(N_1, M_1) = (1500, 3000)$ and $(N_1, M_1) = (500, 4000)$, with $\gamma \in \{150, 200\}$. In the uniform case, we test with $(N_1, M_1) = (1500, 300)$, with $\gamma_l \in \{150, 200\}$, and $\gamma_u$ being $1$. In the inverse case, we test with $(N_1, M_c) = (1500, 3000)$, with $\gamma_l \in \{150, 200\}$, and $\gamma_u$ being $1 / \gamma_l$.

${\cdot}$ \emph{SVHN-LT}: We test our method under $(N_1, M_1) = (1500, 3000)$ setting. The imbalance ratio $\gamma_l$ is set to 150 or 200. With a fixed $\gamma_l = 150$, we also test our method under $\gamma_u \in \{1, 1/150\}$ for the uniform and inverse cases.

${\cdot}$ \emph{STL-10-LT}: Since the ground-truth labels of unlabeled data in STL-10-LT are unknown, we conduct experiments by controlling the imbalance ratio of labeled data only. We set $N_1$ as 150 or 450, with $\gamma_l \in \{15, 20\}$, and directly use the original unlabeled data.

\paragraph{Baseline.} We compare our method with seven LTSSL algorithms published in top-conferences in the past few years, including SAW~\cite{saw}, Adsh~\cite{adsh}, DePL~\cite{depl}, ACR~\cite{acr}, BaCon~\cite{bacon}, CPE~\cite{cpe}, and SimPro~\cite{simpro}, which are all based on the typical SSL method FixMatch. For a fair comparison, we test these baselines and our Meta-Expert on the widely-used codebase USB\footnote{\url{https://github.com/microsoft/Semi-supervised-learning}}. We use the same dataset splits with no overlap between labeled and unlabeled training data for all datasets.

\subsection{Implementation Details}
\label{sec:implementation-details}

We follow the default settings and hyper-parameters in USB, i.e., the batch size of labeled data $B_l$ is set to 64 and unlabeled data $B_u$ is set to 128, and the confidence threshold $t$ is set to 0.95. Moreover, we use the WRN-28-2~\cite{wrn} architecture, and the SGD optimizer with learning rate 3e-2, momentum 0.9, and weight decay 5e-4 for training. We repeat each experiment over three different random seeds and report the mean performance and standard deviation. We conduct the experiments on a single GPU of NVIDIA A100 using PyTorch v2.3.1.

\subsection{Main Result}
\label{sec:main-result}

\paragraph{In the case of consistent distribution ($\gamma_l = \gamma_u$).} We initiate our investigation by conducting experiments in the scenario where $\gamma_l = \gamma_u$. The primary results for CIFAR-10-LT are presented in Table~\ref{tab:tb1}. It is clear that across all different training dataset sizes and imbalance ratios, Meta-Expert achieves higher classification accuracy than all the previous baselines on CIFAR-10-LT. For example, given $(N_1, M_1, \gamma) = (500, 4000, 200)$, Meta-Expert surpasses all previous baselines by up to 2.47 pp. When moving to SVHN-LT dataset in Table~\ref{tab:tb2}, Meta-Expert performs comparable to the previous SOTA method BaCon, surpassing other baselines by up to 0.66 pp given $(N_1, M_1, \gamma) = (1500, 3000, 200)$.

\paragraph{In the case of mismatched distribution ($\gamma_l \neq \gamma_u$).} In practical applications, the distribution of unlabeled data might significantly differ from that of labeled data. Therefore, we investigate uniform and inverse class distributions, such as setting $\gamma_u$ to 1 or 1/200 for CIFAR-10-LT. For STL-10-LT dataset, as the ground-truth labels of the unlabeled data are unknown, we only control the imbalance ratio of the labeled data. The results are presented in Tables~\ref{tab:tb1} and~\ref{tab:tb2}, where we can find Meta-Expert can consistently and significantly outperform baseline algorithms on CIFAR-10-LT, validating its effectiveness to cope with varying class distributions of unlabeled data. Concretely, Meta-Expert surpasses the previous SOTA method by 1.64 pp and other baselines by up to 4.48 pp with $(N_1, M_1, \gamma_l, \gamma_u) = (1500, 3000, 200, 1/200)$. For STL-10-LT dataset, Meta-Expert surpasses the previous SOTA method by 1.27 pp and other baselines by up to 1.43 pp with $N_1 = 450$ and $\gamma_l = 15$. On SVHN-LT dataset, Meta-Expert achieves comparable performances with SimPro, surpassing other baselines by up to 0.55 pp.

In summary, our method outperforms almost all previous baselines regardless of training dataset sizes, imbalance ratios, and unlabeled training data distributions. We also evaluate all methods on FreeMatch~\cite{freematch} in Appendix~\ref{sec:evaluation-on-extra-SSL-method}, where we can get a similar conclusion. 

\subsection{Ablation Study}
\label{sec:ablation-study}

\paragraph{The effect of each module.} In Table~\ref{tab:ablation-study-by-decoupling}, we evaluate the contribution of each key component in Meta-Expert. Specifically, we set $N_1$ to 1500 and $M_1$ to 3000, and perform experiments on CIFAR-10-LT and SVHN-LT. According to Table~\ref{tab:ablation-study-by-decoupling}, we can observe that both DEA and MFF bring significant improvements. For example, on CIFAR-10-LT with $\gamma_l = \gamma_u = 200$, DEA and MFF bring accuracy gains of 0.65 pp and 1.26 pp, respectively, and the accuracy is improved up to 3.10 pp when using them together. When evaluating the overall performance, the DEA module provides an average 1.68 pp accuracy improvement across all imbalance ratios, showing relatively greater performance gains; while the MFF module delivers a 0.76 pp average gain, which though comparatively smaller, remains statistically significant. The combined DEA+MFF configuration achieves 2.42 pp improvement, confirming their complementary effectiveness and synergistic interaction. These improvements conclusively validate the effectiveness of our proposed modules in enhancing model robustness across diverse imbalance ratios. 

Note that MFF improves the accuracy of all cases except the inverse case ($\gamma_u = 1/200$). This phenomenon can be attributed to two reasons. First, the overall imbalance ratio will be reduced as the inverse long-tailed unlabeled data ($\gamma_u = 1/200$) complements the long-tailed labeled data ($\gamma_l = 200$). Second, without the DEA model, our method uses all experts to generate pseudo-labels in the training phase, which introduces more error pseudo-labels and limits the MFF module’s effectiveness. When using the two modules (DEA+MFF) together, the DEA module reduces the quantity of error pseudo-labels, thus, the effectiveness of the MFF module is released completely. 

\begin{table}[!t]
\caption{Comparison of accuracy (\%) on with and without the DEA and MFF modules. The datasets are CIFAR-10-LT with $(N_1, M_1, \gamma_l)$ = $(1500, 3000, 200)$ and SVHN-LT with $(N_1, M_1, \gamma_l)$ = $(1500, 3000, 150)$.}
\label{tab:ablation-study-by-decoupling}
\centering
\resizebox{0.9\linewidth}{!}{
\begin{tabular}{cccccccc}
\toprule
\multicolumn{2}{c}{CPE}     & \multicolumn{3}{c}{CIFAR-10-LT($\gamma_u$)}     & \multicolumn{3}{c}{SVHN-LT($\gamma_u$)} \\
\cmidrule(r){1-2} \cmidrule(r){3-5} \cmidrule(r){6-8}
                       w/ DEA    & w/ MFF        & 200      & 1        & 1/200    & 150               & 1        & 1/150 \\ \midrule
                                 &               & 78.57    & 83.47    & 84.40    & 86.26             & 93.82    & 92.62 \\
                   \checkmark    &               & 79.22    & 83.61    & 84.58    & \textbf{93.90}    & 94.30    & 93.60 \\
                                 & \checkmark    & 79.83    & 83.89    & 83.10    & 90.49             & 92.89    & 93.47 \\
\rowcolor{black!10}\checkmark    & \checkmark    & \textbf{81.67}    & \textbf{83.96}    & \textbf{85.75}    & 93.89    & \textbf{94.34}    & \textbf{94.05} \\ \bottomrule
\end{tabular}
}
\vskip -0.1in
\end{table}

\begin{figure}[t]
\centering
{
\includegraphics[width=0.68\linewidth]{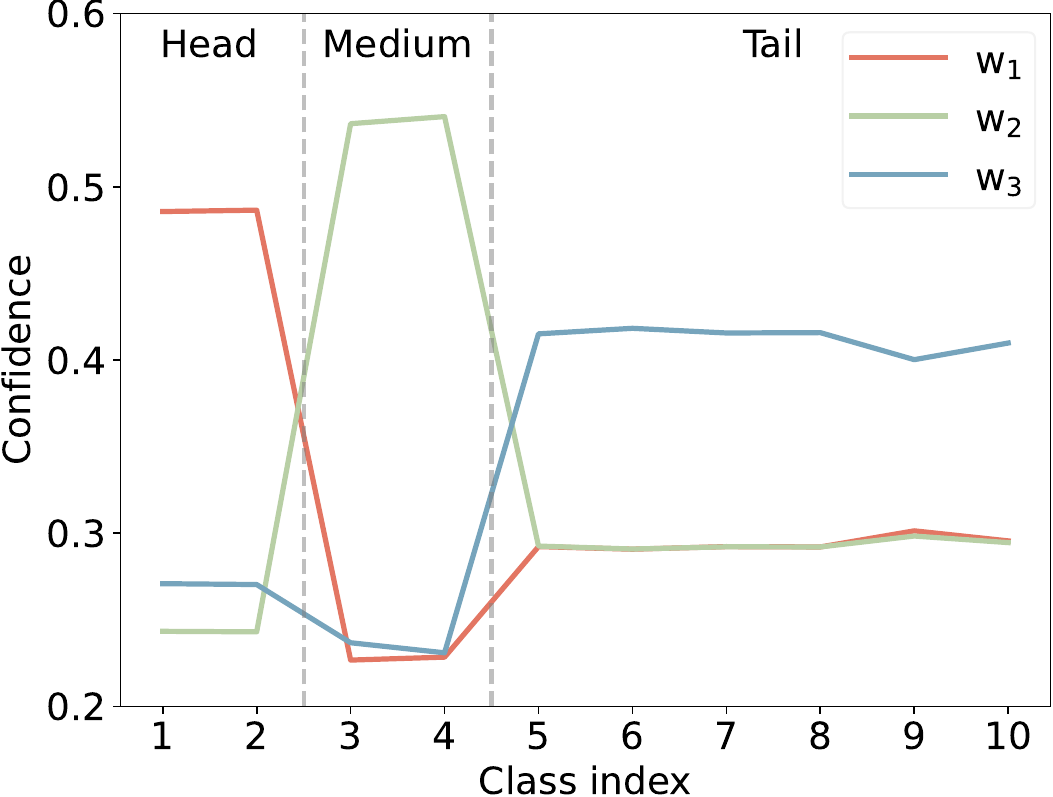}
}\vspace{-0.3cm}
\caption{Confidence level for assigning a specific expert to the head, medium, and tail class samples. The datasets is CIFAR-10-LT with $(N_1, M_1, \gamma_l)$ = $(1500, 3000, 200)$. We follow the setting in CPE to consider the first two of all classes as head classes, the last six as tail classes, and the remaining classes as the medium. We clearly see that when a sample belongs to the head class, $w_1$ is the largest, and $w_2$ and $w_3$ are the largest when a sample belongs to the medium and tail classes, respectively.
}
\label{fig:w-confident}
\end{figure}

\paragraph{Feature combination strategy.} We evaluate the performance of using different feature combination strategies in the MFF module. As shown in Table~\ref{tab:ablation-study-operation-strategy}, we observe that the addition operation outperforms concatenation, achieving a 0.58 pp $\sim$ 1.35 pp higher accuracy. Based on these empirical results, we adopt the addition operation for fusing multi-depth features.

\paragraph{How does the DEA module improve the performance?} As previously mentioned, DEA can correctly estimate the class membership of given samples, which is essential to integrate the expertise of each expert and realize ``a square peg in a square hole". To validate this assumption, we plot the soft class membership estimated by DEA in Fig.~\ref{fig:w-confident}. As can be seen, DEA precisely assigns the head expert to the head class samples, and medium and tail experts to the medium and tail class samples, respectively. Based on this accurate estimation, our method can produce higher-quality pseudo-labels in the training phase, as can be seen in Fig.~\ref{fig:pseudo-label-f1}. Simultaneously, according to Table~\ref{tab:each-expert-does-its-work-acc}, the accuracies of our method in testing phase are also the highest in all cases. The above observations verify that our method utilizes the expertises of all three experts effectively.

\begin{table}[!t]
\caption{Comparison of accuracy (\%) on utilizing different feature combination strategies in the MFF module. $add$ and $con$ denote using the addition and concatenation feature operation strategies, respectively. The dataset is CIFAR-10-LT with $(N_1, M_1, \gamma_l)$ = $(1500, 3000, 200)$.}
\label{tab:ablation-study-operation-strategy}
\centering
\resizebox{0.5\linewidth}{!}{
\begin{tabular}{cccc}
\toprule
\multirow{2}{*}{Strategy} & \multicolumn{3}{c}{$\gamma_u$} \\
\cmidrule(r){2-4}
         & 200      & 1        & 1/200 \\ \midrule
$con$    & 80.32    & 83.38    & 84.65 \\
$add$    & \textbf{81.67}    & \textbf{83.96}    & \textbf{85.75} \\ \bottomrule
\end{tabular}
}
\vskip -0.1in
\end{table}

\paragraph{Computational overhead analysis.} While our proposed modules (three experts, MFF, and DEA) introduce a controlled parameter increase of 13.3 pp (1.5M $\rightarrow$ 1.7M), this design achieves a strategically balanced efficiency-performance trade-off. Experimental results on CIFAR-10-LT demonstrate: a 6.4 pp increase in epoch time (234.5s $\rightarrow$ 249.5s), a 1.6s increase in inference time for evaluating 10,000 samples (7.1s $\rightarrow$ 8.8s), and a substantial accuracy improvement of 3.5 pp (71.9 pp $\rightarrow$ 74.4 pp). These results collectively indicate significant performance enhancement with modest computational overhead.

\section{Conclusion}
\label{sec:conclusion}

In this work, we address the LTSSL problem from a fresh perspective, i.e., automatically integrating the expertises of various experts to produce high-quality pseudo-labels and predictions. We also theoretically prove that effectively exploiting different experts' expertises can reduce the generalization error bound. Specifically, the proposed Meta-Expert algorithm comprises a dynamic expert assignment module and a multi-depth feature fusion module, the former can assign a suitable expert to each sample based on the estimated class membership, and the latter relieves the biased training by fusing different depth features based on the observation that deep feature is more biased towards the head class but more discriminative, which has not been observed before. We validate our proposed method’s effectiveness through extensive experiments across three widely-used LTSSL datasets with different imbalance ratios and unlabeled data distributions, reaching the new state-of-the-art performance. The ablation results show that both the DEA and MFF modules contribute to the performance improvement. Moreover, the DEA module can correctly estimate the class membership of given samples, which is essential to integrate the expertise of each expert.

% Acknowledgements should only appear in the accepted version.
% \section*{Acknowledgements}

% This work was supported by the National Natural Science Foundation of China under Grant U24A20322. This research work is also supported by the Big Data Computing Center of Southeast University.

\section*{Impact Statement}

This paper presents work whose goal is to advance the field of Machine Learning. Specifically, we propose a new long-tailed semi-supervised learning algorithm to improve the performance in distribution mismatched scenarios by integrating the expertises of various experts to produce high-quality pseudo-labels and predictions. We also theoretically prove that the model’s generalization error can be reduced by integrating the expertises of different experts. There are many potential societal consequences of our work, none which we feel must be specifically highlighted here.

% In the unusual situation where you want a paper to appear in the
% references without citing it in the main text, use \nocite
% \nocite{langley00}

\bibliography{meta-expert}
\bibliographystyle{icml2025}

%%%%%%%%%%%%%%%%%%%%%%%%%%%%%%%%%%%%%%%%%%%%%%%%%%%%%%%%%%%%%%%%%%%%%%%%%%%%%%%
%%%%%%%%%%%%%%%%%%%%%%%%%%%%%%%%%%%%%%%%%%%%%%%%%%%%%%%%%%%%%%%%%%%%%%%%%%%%%%%
% APPENDIX
%%%%%%%%%%%%%%%%%%%%%%%%%%%%%%%%%%%%%%%%%%%%%%%%%%%%%%%%%%%%%%%%%%%%%%%%%%%%%%%
%%%%%%%%%%%%%%%%%%%%%%%%%%%%%%%%%%%%%%%%%%%%%%%%%%%%%%%%%%%%%%%%%%%%%%%%%%%%%%%

\newpage
\appendix
\onecolumn
\begin{center}
\Large\textbf{Appendix}
\end{center}

\section{overview of the Appendix}
Our appendix consists of three main sections:

\begin{itemize}
\setlength{\itemsep}{10pt}
\item More related literature and discussions.

\item Additional evaluation on FreeMatch~\cite{freematch}.

\item Proof of the theorem in Sec.~\ref{sec:theoretical-analysis}: theoretical analysis, i.e., Theorem~\ref{thm:generalization-error-bound} (generalization error bound).
\end{itemize}

\section{More Related Literature and Discussions}
\label{sec:more-literature-review-and-discussion}

\subsection{Long-tailed learning (LTL)}
\label{sec:more-literature-review}

Long-tailed learning (LTL) is tailored for the long-tailed distribution exhibiting in real-world applications~\cite{mrcvfc}, which aims to improve the performance of the tail class without compromising that of the head class. The existing methods can be roughly grouped into three categories: re-sampling, logit adjustment, and ensemble learning. Re-sampling~\cite{colt,breadcrumbs,dlsa} adjusts the number of samples for each class, i.e., under-sampling the head class or over-sampling the tail class. Logit adjustment~\cite{ldam,la,vs} seeks to resolve the class imbalance by adjusting the predicted logit of the classifier. Ensemble learning~\cite{balpoe,gml,sade} based methods combine multiple classifiers (experts) to improve the performance and robustness of the model. While these methods have made significant progress in long-tailed learning, they often fail to achieve the expected performance gains when directly applied to long-tailed semi-supervised learning (LTSSL), particularly when the distributions are mismatched between labeled and unlabeled training data.

\subsection{Connections and Differences Compared with Multiple Experts Based Methods}
\label{sec:connection-and-difference}

In the fields of long-tailed learning (LTL) and long-tailed semi-supervised learning (LTSSL), several works have employed multiple experts to enhance the model's performance. Nevertheless, they are significantly different from the proposed Meta-Expert.  

In LTSSL, only CPE~\cite{cpe} utilizes multiple experts to handle unlabeled data across various class distributions. CPE, however, lacks integrating the multiple experts, it only employs three experts to generate pseudo-label simultaneously in the training phase and the uniform expert to make prediction in the testing phase. Consequently, CPE may introduce more error pseudo-labels, thereby limiting its performance.

In contrast to LTSSL, numerous studies incorporate multiple experts in LTL, such as RIDE~\cite{ride}, SADE~\cite{sade}, and BalPoE~\cite{balpoe}. All of these methods are designed for supervised learning and are incapable of handling semi-supervised learning (SSL). Extending them to SSL requires extra effort, as exploiting unlabeled samples is not trivial.

RIDE seeks to learn from multiple independent and diverse experts, making predictions through the averaging of their outputs. The router in RIDE primarily focuses on minimizing the number of experts to reduce the computational cost during the testing phase. SADE leverages self-supervision on testing set to aggregate the learned multiple experts and subsequently utilize the aggregated ones to make prediction in the testing phase. Although effective, it lacks efforts to guide different experts to learn better in their expertises during the training phase. BalPoE refines the logit adjustment intensity across all three experts and averages their outputs to minimize the balanced error while ensuring Fisher consistency. 

Different from the above works, we find that different experts excel at predicting different intervals of samples, e.g., a long-tailed/uniform/inverse long-tailed expert is skilled in samples located in the head/medium/tail interval. Consequently, we propose the dynamic expert assignment (DEA) module to estimate the class membership of samples and dynamically assign suitable experts to each sample based on the estimated membership to produce high-quality pseudo-labels in the training phase (Fig.~\ref{fig:pseudo-label-f1}) and excellent predictions in the testing phase (Tables~\ref{tab:tb1},~\ref{tab:tb2},~\ref{tab:tb1+}, and~\ref{tab:tb2+}). Moreover, Fig.~\ref{fig:w-confident} proves the class membership estimated by the DEA module is accurate.

Finally, for the first time, we observe that shallow features are relatively balanced although less discriminative, and deep features improve the discriminative ability but are less balanced, thus proposing the multi-depth feature fusion (MFF) module to make the model both discriminative and balanced (Table~\ref{tab:each-depth-feature-does-its-work}).

In summary, the differences between our Meta-Expert and existing methods are substantial, even though they also utilize a multi-expert network architecture.

\section{Evaluation on FreeMatch}
\label{sec:evaluation-on-extra-SSL-method}

To further illustrate our method's universality, we reproduce our method and all the compared methods by taking FreeMatch~\cite{freematch} as the base SSL method and evaluate them on CIFAR-10-LT, SVHN-LT, and STL-10-LT with the same settings used in the main paper. Tables~\ref{tab:tb1+} and~\ref{tab:tb2+} present the main results, indicating that our Meta-Expert can produce higher prediction accuracies by incorporating the expertises of different experts. For example, on CIFAR-10-LT, given $(N_1, M_1, \gamma) = (500, 4000, 200)$, Meta-Expert surpasses all previous baselines by up to 7.05 pp in the case of $\gamma_l = \gamma_u$. Moreover, in the case of $\gamma_l \neq \gamma_u$, Meta-Expert surpasses the previous SOTA method by 1.02 pp and outperforms all other baselines by 1.90 pp given $(N_1, M_1, \gamma_l, \gamma_u) = (1500, 3000, 150, 1)$. When moving to SVHN-LT dataset, Meta-Expert surpasses the previous SOTA method by 0.55 pp and other baselines by up to 3.29 pp given $(N_1, M_1, \gamma_l, \gamma_u) = (1500, 3000, 150, 1/150)$. On STL-10-LT, Meta-Expert performs comparable to the previous SOTA method SimPro, surpassing other baselines by up to 0.62 pp. All of these results further illustrate our method's effectiveness.

\begin{table*}[!ht]
\caption{Comparison of accuracy (\%) on CIFAR-10-LT under $\gamma_l = \gamma_u$ and $\gamma_l \neq \gamma_u$ settings. We set $\gamma_l$ to $150$ and $200$ for CIFAR-10-LT. We use \textbf{bold} to mark the best results.}
\label{tab:tb1+}
\centering
\resizebox{1.0\linewidth}{!}{
\begin{tabular}{l|cccc|cccc}
\toprule
& \multicolumn{4}{c|}{CIFAR-10-LT} & \multicolumn{4}{c}{CIFAR-10-LT} \\ \cmidrule(l){2-9}
& \multicolumn{2}{c|}{$N_1=500,M_1=4000$} & \multicolumn{2}{c|}{$N_1=1500,M_1=3000$} & \multicolumn{2}{c|}{$N_1=1500,M_c=3000$} & \multicolumn{2}{c}{$N_1=1500,M_c=3000$} \\ \cmidrule(l){2-9}
& $\gamma_l=200$ & \multicolumn{1}{c|}{$\gamma_l=150$} & $\gamma_l=200$ & $\gamma_l=150$ & $\gamma_l=200$ & \multicolumn{1}{c|} {$\gamma_l=200$} & $\gamma_l=150$ & $\gamma_l=150$ \\
\multirow{-4}{*}{Algorithm} & $\gamma_u=200$ & \multicolumn{1}{c|}{$\gamma_u=150$} & $\gamma_u=200$ & $\gamma_u=150$ & $\gamma_u=1$ & \multicolumn{1}{c|}{$\gamma_u=1/200$} & $\gamma_u=1$ & $\gamma_u=1/150$ \\ \midrule
Supervised            & \ms{41.15}{1.15} & \multicolumn{1}{c|}{\ms{43.88}{1.61}} & \ms{56.83}{1.10} & \ms{59.82}{0.32} & \ms{56.83}{1.10} & \multicolumn{1}{c|}{\ms{56.83}{1.10}} & \ms{59.82}{0.32} & \ms{59.82}{0.32}   \\ \midrule
FreeMatch (ICLR~23)   & \ms{63.35}{0.49} & \multicolumn{1}{c|}{\ms{68.03}{0.68}} & \ms{69.83}{1.36} & \ms{73.00}{0.63} & \ms{78.99}{0.53} & \multicolumn{1}{c|}{\ms{71.33}{0.36}} & \ms{79.60}{0.72} & \ms{72.76}{0.79}   \\
w~/~SAW (ICML~22)     & \ms{59.31}{1.26} & \multicolumn{1}{c|}{\ms{65.69}{0.94}} & \ms{72.05}{0.87} & \ms{74.69}{0.85} & \ms{80.16}{0.25} & \multicolumn{1}{c|}{\ms{73.24}{0.57}} & \ms{81.90}{1.00} & \ms{73.45}{0.24}   \\
w~/~Adsh (ICML~22)    & \ms{62.64}{1.11} & \multicolumn{1}{c|}{\ms{66.22}{1.54}} & \ms{69.27}{1.54} & \ms{73.81}{0.53} & \ms{71.88}{0.47} & \multicolumn{1}{c|}{\ms{65.16}{0.06}} & \ms{73.14}{0.57} & \ms{66.29}{0.76}   \\
w~/~DePL (CVPR~22)    & \ms{65.04}{1.07} & \multicolumn{1}{c|}{\ms{69.65}{1.43}} & \ms{70.66}{0.98} & \ms{73.29}{0.53} & \ms{80.37}{0.49} & \multicolumn{1}{c|}{\ms{72.28}{0.17}} & \ms{80.14}{1.03} & \ms{73.20}{1.37}   \\
w~/~ACR (CVPR~23)     & \ms{58.36}{2.35} & \multicolumn{1}{c|}{\ms{60.98}{1.24}} & \ms{70.24}{1.99} & \ms{72.55}{1.34} & \ms{81.89}{0.04} & \multicolumn{1}{c|}{\ms{82.68}{0.21}} & \ms{83.49}{0.36} & \ms{83.85}{0.41}   \\
w~/~BaCon (AAAI~24)   & \ms{68.63}{0.77} & \multicolumn{1}{c|}{\ms{72.69}{0.68}} & \ms{75.97}{1.34} & \ms{77.71}{1.36} & \ms{83.20}{0.11} & \multicolumn{1}{c|}{\ms{82.48}{0.08}} & \ms{83.91}{0.42} & \ms{83.66}{1.25}   \\
w~/~CPE (AAAI~24)     & \ms{66.82}{1.36} & \multicolumn{1}{c|}{\ms{69.80}{1.28}} & \ms{77.28}{0.63} & \ms{79.24}{0.30} & \ms{83.59}{0.17} & \multicolumn{1}{c|}{\ms{80.37}{1.11}} & \ms{84.37}{0.14} & \ms{79.12}{0.80}   \\
w~/~SimPro (ICML~24)  & \ms{64.20}{0.39} & \multicolumn{1}{c|}{\ms{69.17}{3.31}} & \ms{78.57}{0.72} & \ms{80.49}{0.39} & \ms{80.82}{0.24} & \multicolumn{1}{c|}{\ms{77.59}{0.72}} & \ms{81.30}{0.98} & \ms{79.67}{0.99}   \\
\rowcolor{black!10}w~/~Meta-Expert (Ours) & \msb{75.68}{0.28} & \multicolumn{1}{c|}{\msb{78.26}{0.63}} &    \msb{80.53}{0.71} & \msb{82.69}{0.51} & \msb{84.30}{0.29} & \multicolumn{1}{c|}{\msb{85.32}{0.40}} & \msb{85.39}{0.48}  & \msb{85.89}{0.79} \\ \bottomrule
\end{tabular}
}
\vskip -0.1in
\end{table*}

\begin{table*}[!ht]
\caption{Comparison of accuracy (\%) on STL-10-LT and SVHN-LT under $\gamma_l = \gamma_u$ and $\gamma_l \neq \gamma_u$ settings. We set $\gamma_l$ to $15$ and $20$ for STL-10-LT, and $\gamma_l$ to $150$ and $200$ for SVHN-LT. We use \textbf{bold} to mark the best results. $N/A$ denotes the unknown $\gamma_u$ in STL-10-LT since its inaccessible ground-truth label for unlabeled dataset.}
\label{tab:tb2+}
\centering
\resizebox{1.0\linewidth}{!}{
\begin{tabular}{l|cccc|cccc}
\toprule
& \multicolumn{4}{c|}{STL-10-LT} & \multicolumn{4}{c}{SVHN-LT}  \\ \cmidrule(l){2-9}
& \multicolumn{2}{c|}{$N_1=150,M_1=100k$} & \multicolumn{2}{c|}{$N_1=450,M_1=100k$} & \multicolumn{2}{c|}{$N_1=1500,M_1=3000$} & \multicolumn{2}{c}{$N_1=1500,M_c=3000$}  \\ \cmidrule(l){2-9}
& $\gamma_l=20$ & \multicolumn{1}{c|} {$\gamma_l=15$} & $\gamma_l=20$ & $\gamma_l=15$ & $\gamma_l=200$ & \multicolumn{1}{c|}{$\gamma_l=150$} & $\gamma_l=150$ & $\gamma_l=150$  \\
\multirow{-4}{*}{Algorithm} & $\gamma_u=N/A$ & \multicolumn{1}{c|}{$\gamma_u=N/A$} & $\gamma_u=N/A$ & $\gamma_u=N/A$ & $\gamma_u=200$ & \multicolumn{1}{c|}{$\gamma_u=150$} & $\gamma_u=1$ & $\gamma_u=1/150$  \\ \midrule
Supervised           & \ms{40.44}{1.42} & \multicolumn{1}{c|}{\ms{42.31}{0.42}} & \ms{56.56}{1.07} & \ms{59.81}{0.45} & \ms{84.10}{0.05} & \multicolumn{1}{c|}{\ms{86.14}{0.50}} & \ms{86.14}{0.50} & \ms{86.14}{0.50}   \\ \midrule
FreeMatch (ICLR~23)  & \ms{70.67}{0.83} & \multicolumn{1}{c|}{\ms{70.58}{0.17}} & \ms{76.66}{0.32} & \ms{77.40}{0.31} & \ms{90.87}{1.01} & \multicolumn{1}{c|}{\ms{91.66}{0.21}} & \ms{94.66}{0.41} & \ms{88.01}{0.87}   \\
w~/~SAW (ICML~22)    & \ms{71.27}{0.69} & \multicolumn{1}{c|}{\ms{70.91}{0.54}} & \ms{78.07}{0.06} & \ms{78.15}{0.44} & \ms{89.04}{0.59} & \multicolumn{1}{c|}{\ms{90.09}{0.16}} & \ms{94.57}{0.24} & \ms{89.22}{1.18}   \\
w~/~Adsh (ICML~22)   & \ms{67.37}{1.15} & \multicolumn{1}{c|}{\ms{68.10}{0.10}} & \ms{73.44}{0.70} & \ms{74.43}{0.14} & \ms{90.22}{0.45} & \multicolumn{1}{c|}{\ms{91.78}{0.22}} & \ms{94.21}{0.57} & \ms{88.47}{0.66}   \\
w~/~DePL (CVPR~22)   & \ms{70.89}{0.40} & \multicolumn{1}{c|}{\ms{70.47}{0.48}} & \ms{77.41}{0.11} & \ms{77.47}{0.47} & \ms{90.64}{0.61} & \multicolumn{1}{c|}{\ms{91.44}{0.37}} & \ms{94.62}{0.32} & \ms{87.74}{1.06}   \\
w~/~ACR (CVPR~23)    & \ms{69.89}{0.54} & \multicolumn{1}{c|}{\ms{70.98}{0.57}} & \ms{78.51}{0.26} & \ms{79.35}{0.35} & \ms{84.78}{1.11} & \multicolumn{1}{c|}{\ms{87.00}{0.75}} & \ms{92.89}{0.35} & \ms{91.21}{0.09}   \\
w~/~BaCon (AAAI~24)  & \ms{71.51}{0.22} & \multicolumn{1}{c|}{\msb{71.69}{0.69}} & \ms{78.93}{0.13} & \ms{79.34}{0.72} & \ms{91.13}{1.10} & \multicolumn{1}{c|}{\ms{92.45}{0.18}} & \ms{94.82}{0.24} & \ms{92.85}{0.44}   \\
w~/~CPE (AAAI~24)    & \ms{69.73}{0.25} & \multicolumn{1}{c|}{\ms{70.46}{0.54}} & \ms{78.84}{0.13} & \ms{79.18}{0.54} & \ms{91.83}{0.35} & \multicolumn{1}{c|}{\ms{92.21}{0.03}} & \ms{94.32}{0.41} & \ms{90.39}{0.41}   \\
w~/~SimPro (ICML~24) & \ms{43.30}{3.18} & \multicolumn{1}{c|}{\ms{46.05}{1.22}} & \ms{66.70}{1.69} & \ms{68.03}{1.17} & \ms{93.17}{0.35} & \multicolumn{1}{c|}{\msb{93.73}{0.06}}& \msb{94.86}{0.22}& \ms{93.95}{0.29}   \\
\rowcolor{black!10}w~/~Meta-Expert (Ours) &    \msb{71.57}{0.40} & \multicolumn{1}{c|}{\ms{71.60}{1.00}} & \msb{78.94}{0.08} & \msb{79.40}{0.25} & \msb{93.21}{0.09} & \multicolumn{1}{c|}{\ms{93.67}{0.07}} & \ms{94.80}{0.37}  & \msb{94.50}{0.33} \\ \bottomrule
\end{tabular}
}
\vskip -0.1in
\end{table*}

\section{Proof of Theorem~\ref{thm:generalization-error-bound}}
\label{sec:proof-of-theorem-generalization-error-bound}

We first copy the Theorem~\ref{thm:generalization-error-bound} here.

\textbf{Theorem~\ref{thm:generalization-error-bound}.} \textit{Suppose that the loss function $\ell(f(x), y)$ is $\rho$-Lipschitz with respect to $f(x)$ for all $y \in \{1, \dots, C\}$ and upper-bounded by $U$. Given the class membership $\eta \in \{1, \dots, Q\}$ and overall pseudo-labeling error $\epsilon > 0$, if $\frac{1}{M} \sum \nolimits_{j = 1}^{M} \sum_{k = 1}^{Q} \mathbb{I} (\eta_{j, k} = 1) |\mathbb{I} \big(max \big(f_k(x_j) \big) > t \big) - \mathbb{I} (\hat y_{j} = y_{j}) | \leq \epsilon$, for any $\delta > 0$, with probability at least $1 - \delta$, we have:}
\begin{align}
R(\hat f) - R(f^*) \leq 2 U \epsilon + 4 \sqrt{2} \rho \sum_{y = 1}^{C} \mathcal{R}_O(\mathcal{H}_y) + 2 U \sqrt{\frac{\log \frac{2}{\delta}}{2O}}.
\end{align}

\begin{proof}

We first derive the uniform deviation bound between $R(f)$ and $\widehat R(f)$ by the following lemma.

\begin{lemma}
\label{lem:1}

Suppose that the loss function $\ell(f(x), y)$ is $\rho$-Lipschitz with respect to $f(x)$ for all $y \in \{1, \dots, C\}$ and upper-bounded by $U$. For any $\delta > 0$, with probability at least $1 - \delta$, we have:
\begin{align}
|R(f) - \widehat R(f)| \leq 2 \sqrt{2} \rho \sum_{y = 1}^{C} \mathcal{R}_{N + M}(\mathcal{H}_y) + U \sqrt{\frac{\log \frac{2}{\delta}}{2 (N + M)}},
\end{align}where the function space $\mathcal{H}_y$ for the label $y \in \{1, \dots, C\}$ is $\{h:x \longmapsto f_y(x) | f \in \mathcal{F} \}$.

\end{lemma}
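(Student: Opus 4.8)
The plan is to establish Lemma~\ref{lem:1} by the standard Rademacher-complexity route for uniform convergence, regarding $\widehat R(f)$ as the empirical mean of the loss $\ell(f(x_i),y_i)$ over the $O=N+M$ training points (each endowed with its true label for the purpose of this bound) so that $\mathbb{E}[\widehat R(f)]=R(f)$. First I would introduce the one-sided supremum deviation $\Phi = \sup_{f\in\mathcal{F}}\bigl(R(f)-\widehat R(f)\bigr)$, viewed as a function of the $O$ i.i.d. samples. Since $\ell$ takes values in $[0,U]$ and $\widehat R$ is an average over $O$ terms, replacing any single sample changes $\widehat R(f)$ by at most $U/O$ uniformly in $f$, so $\Phi$ has bounded differences with constants $c_i=U/O$. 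McDiarmid's inequality then yields, with probability at least $1-\delta/2$, the tail bound $\Phi \le \mathbb{E}[\Phi] + U\sqrt{\log(2/\delta)/(2O)}$, which already accounts for the $U\sqrt{\log(2/\delta)/(2O)}$ term, the factor $2$ inside the logarithm arising from the two-sided union bound taken at the end.

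Next I would control $\mathbb{E}[\Phi]$. The standard symmetrization argument gives $\mathbb{E}[\Phi] \le 2\,\mathcal{R}_O(\ell\circ\mathcal{F})$, where $\ell\circ\mathcal{F}=\{(x,y)\mapsto \ell(f(x),y):f\in\mathcal{F}\}$ is the loss class and $\mathcal{R}_O$ denotes its expected Rademacher complexity. The remaining task is to strip off the loss and reduce to the coordinate function classes $\mathcal{H}_y$. Here I would invoke a vector-valued (Talagrand-type) contraction inequality: because $\ell(\cdot,y)$ is $\rho$-Lipschitz in the $C$-dimensional prediction $f(x)=(f_1(x),\dots,f_C(x))$, one obtains $\mathcal{R}_O(\ell\circ\mathcal{F}) \le \sqrt{2}\,\rho \sum_{y=1}^{C}\mathcal{R}_O(\mathcal{H}_y)$. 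Combining this with symmetrization gives $\mathbb{E}[\Phi] \le 2\sqrt{2}\,\rho\sum_{y=1}^{C}\mathcal{R}_O(\mathcal{H}_y)$, and substituting into the McDiarmid bound proves the one-sided inequality $\Phi \le 2\sqrt{2}\,\rho\sum_{y=1}^{C}\mathcal{R}_O(\mathcal{H}_y) + U\sqrt{\log(2/\delta)/(2O)}$ with probability at least $1-\delta/2$. Running the identical argument on $\Phi'=\sup_{f\in\mathcal{F}}\bigl(\widehat R(f)-R(f)\bigr)$ and applying a union bound over the two events gives both directions simultaneously with probability at least $1-\delta$, which is exactly the two-sided statement $|R(f)-\widehat R(f)|\le 2\sqrt{2}\,\rho\sum_{y=1}^{C}\mathcal{R}_O(\mathcal{H}_y)+U\sqrt{\log(2/\delta)/(2O)}$ with $O=N+M$.

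The main obstacle is the vector-contraction step. The scalar Talagrand contraction lemma does not apply directly, because the argument of the loss is the $C$-dimensional vector $f(x)$ rather than a scalar; this is precisely why the bound features a sum over the $C$ coordinate classes $\mathcal{H}_y$ together with the characteristic $\sqrt{2}$ factor. I would therefore rely on Maurer's vector-contraction inequality for Lipschitz functions of vector-valued predictions, verifying that the stated $\rho$-Lipschitz assumption is the hypothesis it requires (Lipschitz with respect to the Euclidean norm on the prediction vector). The rest of the argument — bounded differences, McDiarmid, symmetrization, and the final union bound over the two directions — is routine and accounts straightforwardly for every constant appearing in the claimed bound.
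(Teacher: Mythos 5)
Your proposal is correct and follows essentially the same route as the paper's proof: McDiarmid's inequality with bounded-difference constants $U/(N+M)$, symmetrization to bound the expected supremum by twice the Rademacher complexity of the loss class, Maurer's vector-contraction inequality to reduce to $\sqrt{2}\,\rho\sum_{y=1}^{C}\mathcal{R}_{N+M}(\mathcal{H}_y)$, and a final union bound over the two one-sided deviations. Your explicit reading of $\widehat R(f)$ as a single empirical mean over all $O=N+M$ points (so that $\mathbb{E}[\widehat R(f)]=R(f)$) is precisely the interpretation the paper's proof uses implicitly.
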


\begin{proof}

In order to prove this lemma, we define the Rademacher complexity of the composition of loss function $\ell$ and model $f \in \mathcal{F}$ with $N$ labeled and $M$ unlabeled training samples as follows:
\begin{align}
& \mathcal{R}_{N + M}(\ell \circ \mathcal{F}) \nonumber \\
= & \mathbb{E}_{(x, y, \mu)} \left[\sup_{f \in \mathcal{F}} \sum_{i = 1}^{N} \mu_{i} \Big(\ell \big(f(x_{i}), y_{i} \big) \Big) + \sum_{j = 1}^{M} \mu_{j} \Big(\ell \big(f(x_{j}), y_{j} \big) \Big) \right] \nonumber \\
\leq & \sqrt{2} \rho \sum_{y = 1}^{C} \mathcal{R}_{N + M}(\mathcal{H}_y),
\end{align}where $\circ$ denotes the function composition operator, $\mathbb{E}_{(x, y, \mu)}$ denotes the expectation over $x$, $y$, and $\mu$, $\mu$ denotes the Rademacher variable, $\sup_{f \in \mathcal{F}}$ denotes the supremum (or least upper bound) over the function set $\mathcal{F}$ of model $f$. The second line holds because of the Rademacher vector contraction inequality~\cite{virc}.

Then, we proceed with the proof by showing that the one direction $\sup_{f \in \mathcal{F}} R(f) - \widehat R(f)$ is bounded with probability at least $1 - \delta / 2$, and the other direction $\sup_{f \in \mathcal{F}} \widehat R(f) - R(f)$ can be proved similarly. Note that replacing a sample $(x_i, y_i)$ leads to a change of $\sup_{f \in \mathcal{F}} R(f) - \widehat R(f)$ at most $\frac{U}{N + M}$ due to the fact that $\ell$ is bounded by $U$. According to the McDiarmid’s inequality~\cite{fml}, for any $\delta > 0$, with probability at least $1 - \delta / 2$, we have:
\begin{align}
\sup_{f \in \mathcal{F}} R(f) - \widehat R(f) \leq \mathbb{E} \left[\sup_{f \in \mathcal{F}} R(f) - \widehat R(f) \right] + U \sqrt{\frac{\log \frac{2}{\delta}}{2 (N + M)}}.
\end{align}

According to the result in ~\cite{fml} that shows $\mathbb{E} \left[\sup_{f \in \mathcal{F}} R(f) - \widehat R(f) \right] \leq 2 \mathcal{R}_{N+M}(\mathcal{F})$, and further considering the other direction $\sup_{f \in \mathcal{F}} \widehat R(f) - R(f)$, with probability at least $1 - \delta$, we have:
\begin{align}
\sup_{f \in \mathcal{F}} |R(f) - \widehat R(f)| \leq 2 \sqrt{2} \rho \sum_{y = 1}^{C} \mathcal{R}_{N+M}(\mathcal{H}_y) + U \sqrt{\frac{\log \frac{2}{\delta}}{2 (N + M)}},
\end{align}which completes the proof.

\end{proof}

Then, we can bound the difference between $\widehat R_u(f)$ and $\widehat R_u^{\prime}(f)$ as follows.

\begin{lemma}
\label{lem:2}

Suppose that the loss function $\ell(f(x), y)$ is $\rho$-Lipschitz with respect to $f(x)$ for all $y \in \{1, \dots, C\}$ and upper-bounded by $U$. Given the class membership $\eta \in \{1, \dots, Q\}$ and overall pseudo-labeling error $\epsilon > 0$, if $\frac{1}{M} \sum \nolimits_{j = 1}^{M} \sum_{k = 1}^{Q} \mathbb{I} (\eta_{j, k} = 1) |\mathbb{I} \big(max \big(f_k(x_j) \big) > t \big) - \mathbb{I} (\hat y_{j} = y_{j}) | \leq \epsilon$, we have:
\begin{align}
|\widehat R_u^{\prime}(f) - \widehat R_u(f)| \leq U \epsilon.
\end{align}

\end{lemma}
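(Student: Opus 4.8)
The plan is to bound the two empirical risks on the unlabeled set sample-by-sample and then collect the contributions using the hypothesized pseudo-labeling error budget $\epsilon$. First I would put both risks on a common $\frac{1}{M}$ normalization by folding the high-confidence selection into the mask $\mathbb{I}$, so that the difference becomes a single average over the $M$ unlabeled points in which the selection status and the pseudo-label $\hat y_j$ appear explicitly in each term:
\begin{align}
\widehat R_u^{\prime}(f) - \widehat R_u(f) = \frac{1}{M} \sum_{j = 1}^{M} \Big[ \ell \big( f(x_j), \hat y_j \big) \, \mathbb{I}\big( \max(f(x_j)) > t \big) - \ell \big( f(x_j), y_j \big) \Big].
\end{align}

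The key observation is a per-sample case analysis tied to the quantity $\big| \mathbb{I}(\max(f(x_j)) > t) - \mathbb{I}(\hat y_j = y_j) \big|$ that appears in the hypothesis. When a point is selected and its pseudo-label is correct (so $\hat y_j = y_j$), the two loss evaluations coincide and the summand is exactly $0$; this is precisely the case in which that indicator difference is $0$. In the two remaining ``bad'' cases — a selected point carrying a wrong pseudo-label, or an unselected point whose true-label loss is dropped — the summand reduces to a single loss evaluation, hence its absolute value is at most $U$ because $\ell$ is uniformly upper-bounded by $U$; and in both of these cases the indicator difference equals $1$. Thus each summand is controlled by $U \cdot \big| \mathbb{I}(\max(f(x_j)) > t) - \mathbb{I}(\hat y_j = y_j) \big|$.

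Applying the triangle inequality across the sum and using the per-sample bound, together with the fact that the single assigned expert $k$ with $\eta_{j,k} = 1$ collapses the inner sum $\sum_{k=1}^{Q} \mathbb{I}(\eta_{j,k} = 1)(\cdots)$ to the term for sample $j$, I would arrive at
\begin{align}
\big| \widehat R_u^{\prime}(f) - \widehat R_u(f) \big| \leq \frac{U}{M} \sum_{j = 1}^{M} \sum_{k = 1}^{Q} \mathbb{I}(\eta_{j, k} = 1) \big| \mathbb{I}\big( \max(f_k(x_j)) > t \big) - \mathbb{I}(\hat y_j = y_j) \big| \leq U \epsilon,
\end{align}
where the final inequality is exactly $U$ times the hypothesized bound, completing the proof.

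The step I expect to be the main obstacle is the per-sample accounting: reconciling the two different normalizations ($\frac{1}{\hat M}$ for $\widehat R_u^{\prime}$ versus $\frac{1}{M}$ for $\widehat R_u$) when rewriting over the full index set, and verifying that \emph{every} sample whose selection status disagrees with its pseudo-label correctness is charged exactly once against the error indicator, so that no ``bad'' summand escapes the budget $\epsilon$. In particular, the selected-but-wrongly-labelled and unselected-but-correctly-labelled cases must both be shown to be captured by $\big| \mathbb{I}(\text{selected}) - \mathbb{I}(\text{correct}) \big| = 1$, and the per-term bound $U$ must be argued from the uniform upper bound on $\ell$ rather than from its Lipschitz constant $\rho$.
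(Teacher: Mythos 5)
Your strategy is the same one the paper uses in spirit: expand both risks over the full index set, do a per-sample accounting, and charge each discrepant sample against the indicator budget in the hypothesis, with the uniform bound $U$ converting counts into losses (the paper just runs it as two one-sided bounds, charging selected-but-wrong samples in the upper bound and unselected samples in the lower bound, rather than your single absolute-value estimate). However, your key per-sample inequality has a hole. You assert that in every ``bad'' case the indicator difference $\big|\mathbb{I}\big(\max(f_k(x_j))>t\big)-\mathbb{I}(\hat y_j=y_j)\big|$ equals $1$, but consider a sample that is \emph{not} selected \emph{and} whose pseudo-label is \emph{wrong}: both indicators are $0$, so the difference is $0$, while the summand in your display is $-\ell(f_k(x_j),y_j)$, which can be as large as $U$ in magnitude. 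Such a sample contributes up to $U/M$ to $|\widehat R_u^{\prime}(f)-\widehat R_u(f)|$ yet contributes nothing to the quantity bounded by $\epsilon$, so the chain $|\text{summand}_j|\le U\,\big|\mathbb{I}(\cdot)-\mathbb{I}(\cdot)\big|$ fails and the final inequality does not follow from your argument. In the extreme where no sample is selected and every pseudo-label is wrong, the hypothesis holds with $\epsilon=0$ while $|\widehat R_u^{\prime}(f)-\widehat R_u(f)|=\widehat R_u(f)$ can approach $U$. To be fair, the paper's own lower-bound step charges \emph{all} unselected samples at rate $\epsilon_{k,k}$, which the stated hypothesis likewise does not license when unselected samples carry wrong pseudo-labels, so you have reproduced rather than introduced this difficulty; but as written your case analysis is incomplete and you should either add an assumption excluding (or separately controlling) unselected-and-mislabeled samples, or restate the budget as $\frac{1}{M}\sum_j\sum_k\mathbb{I}(\eta_{j,k}=1)\max\big\{\mathbb{I}(\max(f_k(x_j))\le t),\,\mathbb{I}(\hat y_j\ne y_j)\big\}\le\epsilon$, under which your per-sample bound does hold in all four cases.

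The normalization mismatch you flag at the end is also real and is not resolved by your proof (nor by the paper's, which expands $\widehat R_u^{\prime}(f)$ with the $1/\hat M$ prefactor and silently switches to $1/M$ on the next line): since $\hat M\le M$, one has $\frac{1}{\hat M}\sum_{\mathrm{sel}}a_j\ge\frac{1}{M}\sum_{\mathrm{sel}}a_j$ for nonnegative summands, so ``folding the selection into the mask'' under a $1/M$ average genuinely changes the quantity being bounded whenever $\hat M<M$. A clean fix is to define $\widehat R_u^{\prime}$ with the $1/M$ normalization from the outset, or to carry an explicit correction term of the form $\big(\tfrac{1}{\hat M}-\tfrac{1}{M}\big)\sum_{\mathrm{sel}}\ell\big(f_k(x_j),\hat y_j\big)$ and bound it separately.
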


\begin{proof}

Let’s first expand $\widehat R_u^{\prime}(f)$:
\begin{align}
\widehat R_u^{\prime}(f) & = \frac{1}{\hat M} \sum_{j = 1}^{\hat M} \sum_{k = 1}^{Q} \mathbb{I} (\eta_{j, k} = 1) \mathbb{I} \big(max \big(f_k(x_j) \big) > t \big) \ell \big(f_k(x_j), \hat y_j \big).
\end{align}

Then, we show its upper bound:
\begin{align}
\widehat R_u^{\prime}(f) & \leq \frac{1}{M} \sum_{j = 1}^{M} \sum_{k = 1}^{Q} \mathbb{I} (\eta_{j, k} = 1) \ell \big(f_k(x_j), y_j \big) + \mathbb{I} (\eta_{j, k} = 1) \mathbb{I} \big(\hat y_j \neq y_j, max \big(f_k(x_j) \big) > t \big) \ell \big(f_k(x_j), \hat y_j \big) \nonumber \\
& \leq \frac{1}{M} \sum_{j = 1}^{M} \sum_{k = 1}^{Q} \mathbb{I} (\eta_{j, k} = 1) \Big( \ell \big(f_k(x_j), y_j \big) + \epsilon_{k, k} \ell \big(f_k(x_j), \hat y_j \big) \Big) \nonumber \\
& \leq \widehat R_u(f) + U \frac{1}{Q} \sum_{k = 1}^{Q} \epsilon_{k, k} = \widehat R_u(f) + U \epsilon,
\end{align}

and its lower bound:
\begin{align}
\widehat R_u^{\prime}(f) & \geq \frac{1}{M} \sum_{j = 1}^{M} \sum_{k = 1}^{Q} \mathbb{I} (\eta_{j, k} = 1) \ell \big(f_k(x_j), y_j \big) - \mathbb{I} (\eta_{j, k} = 1) \mathbb{I} \big(max \big(f_k(x_j) \big) \leq t \big) \ell \big(f_k(x_j), \hat y_j \big) \nonumber \\
& \geq \frac{1}{M} \sum_{j = 1}^{M} \sum_{k = 1}^{Q} \mathbb{I} (\eta_{j, k} = 1) \Big( \ell \big(f_k(x_j), y_j \big) - \epsilon_{k, k} \ell \big(f_k(x_j), \hat y_j \big) \Big) \nonumber \\
& \geq \widehat R_u(f) - U \frac{1}{Q} \sum_{k = 1}^{Q} \epsilon_{k, k} = \widehat R_u(f) - U \epsilon.
\end{align}

By combining these two bounds, we can obtain the following result:
\begin{align}
|\widehat R_u^{\prime}(f) - \widehat R_u(f)| \leq U \epsilon,
\end{align}which concludes the proof.

\end{proof}

Based on the above lemmas, for any $\delta > 0$, with probability at least $1 - \delta$, we have:
\begin{align}
R(\hat f) & \leq \widehat R(\hat f) + 2 \sqrt{2} \rho \sum_{y = 1}^{C} \mathcal{R}_O(\mathcal{H}_y) + U \sqrt{\frac{\log \frac{2}{\delta}}{2O}} \nonumber \\
& \leq \widehat R_l(\hat f) + \widehat R_u(\hat f) + 2 \sqrt{2} \rho \sum_{y = 1}^{C} \mathcal{R}_O(\mathcal{H}_y) + U \sqrt{\frac{\log \frac{2}{\delta}}{2O}} \nonumber \\
& \leq \widehat R_l(\hat f) + \widehat R_u^{\prime}(\hat f) + U \epsilon + 2 \sqrt{2} \rho \sum_{y = 1}^{C} \mathcal{R}_O(\mathcal{H}_y) + U \sqrt{\frac{\log \frac{2}{\delta}}{2O}} \nonumber \\
& \leq \widehat R_l(f) + \widehat R_u^{\prime}(f) + U \epsilon + 2 \sqrt{2} \rho \sum_{y = 1}^{C} \mathcal{R}_O(\mathcal{H}_y) + U \sqrt{\frac{\log \frac{2}{\delta}}{2O}} \\
& \leq \widehat R_l(f) + \widehat R_u(f) + 2 U \epsilon + 2 \sqrt{2} \rho \sum_{y = 1}^{C} \mathcal{R}_O(\mathcal{H}_y) + U \sqrt{\frac{\log \frac{2}{\delta}}{2O}} \nonumber \\
& \leq \widehat R(f) + 2 U \epsilon + 2 \sqrt{2} \rho \sum_{y = 1}^{C} \mathcal{R}_O(\mathcal{H}_y) + U \sqrt{\frac{\log \frac{2}{\delta}}{2O}} \nonumber \\
& \leq R(f) + 2 U \epsilon + 4 \sqrt{2} \rho \sum_{y = 1}^{C} \mathcal{R}_O(\mathcal{H}_y) + 2 U \sqrt{\frac{\log \frac{2}{\delta}}{2O}}, \nonumber
\end{align}where the ﬁrst and seventh lines are based on Lemma~\ref{lem:1}, and three and fifth lines are due to Lemma~\ref{lem:2}. The fourth line is by the definition of $\hat{f}$. At this point, we have proven the Theorem~\ref{thm:generalization-error-bound}.

\end{proof}

%%%%%%%%%%%%%%%%%%%%%%%%%%%%%%%%%%%%%%%%%%%%%%%%%%%%%%%%%%%%%%%%%%%%%%%%%%%%%%%
%%%%%%%%%%%%%%%%%%%%%%%%%%%%%%%%%%%%%%%%%%%%%%%%%%%%%%%%%%%%%%%%%%%%%%%%%%%%%%%

\end{document}